\theoremstyle{definition}
\newtheorem{definition}{Definition}[section]
\newtheorem{example}{Example}[section]
\newtheorem{theorem}{Theorem}[section]
\newtheorem{lemma}{Lemma}[section]
\newtheorem{proposition}{Proposition}[section]
\newtheorem{assumption}{Assumption}[section]
\newcommand{\ie}{i.e.}
\newcommand{\eg}{e.g.}
\newcommand{\dist}{\mu}
\newcommand{\Hzo}{H_{0\mbox{-}1}}
\newcommand{\xxspace}{\mathcal{X}}
\newcommand{\yyspace}{\mathcal{Y}}
\newcommand{\zzspace}{\mathcal{Z}}
\newcommand{\Ypred}{\widehat{Y}}
\newcommand{\RR}{\mathbb{R}}
\newcommand{\Exp}{\mathbb{E}}
\newcommand{\HH}{\mathcal{H}}
\newcommand{\xx}{\mathbf{x}}
\DeclareMathOperator*{\argmin}{arg\,min}
\newcommand{\tr}{\text{Tr}}
\newcommand{\defeq}{\vcentcolon=}
\newcommand{\eps}{\varepsilon}
\DeclarePairedDelimiterX{\inp}[2]{\langle}{\rangle}{#1, #2}
\title{\huge \textbf{Costs and Benefits of Fair Regression}}
\author{%
  Han Zhao \\
  Department of Computer Science\\
  University of Illinois at Urbana-Champaign\\
  \texttt{hanzhao@illinois.edu} \\
}
\date{}
\begin{document}

\maketitle

\begin{abstract}
    Real-world applications of machine learning tools in high-stakes domains are often regulated to be fair, in the sense that the predicted target should satisfy some quantitative notion of parity with respect to a protected attribute. However, the exact tradeoff between fairness and accuracy with a real-valued target is not entirely clear. In this paper, we characterize the inherent tradeoff between statistical parity and accuracy in the regression setting by providing a lower bound on the error of any fair regressor. Our lower bound is sharp, algorithm-independent, and admits a simple interpretation: when the moments of the target differ between groups, any fair algorithm has to make an error on at least one of the groups. We further extend this result to give a lower bound on the joint error of any (approximately) fair algorithm, using the Wasserstein distance to measure the quality of the approximation. With our novel lower bound, we also show that the price paid by a fair regressor that does not take the protected attribute as input is less than that of a fair regressor with explicit access to the protected attribute. On the upside, we establish the first connection between individual fairness, accuracy parity, and the Wasserstein distance by showing that if a regressor is individually fair, it also approximately verifies the accuracy parity, where the gap is given by the Wasserstein distance between the two groups. Inspired by our theoretical results, we develop a practical algorithm for fair regression through the lens of representation learning, and conduct experiments on a real-world dataset to corroborate our findings.
\end{abstract}

\section{Introduction}
High-stakes domains, \eg, loan approvals, and credit scoring, have been using machine learning tools to help make decisions. A central question in these applications is whether the algorithm makes fair decisions, in the sense that certain sensitive data does not influence the outcomes or accuracy of the learning algorithms. For example, as regulated by the General Data Protection Regulation (GDPR, Article 22 Paragraph 4)~\citep{gdpr2021}, ``decisions which produces legal effects concerning him or her or of similar importance shall not be based on certain personal data'', including race, religious belief, etc. As a result, using the sensitive data directly in algorithm is often prohibited. However, due to the redundant encoding, redlining, and other problems, this ``fairness through blindness'' is often not sufficient to ensure algorithmic fairness in automated decision-making processes. 

Many works have produced methods aiming at reducing unfairness~\citep{calmon2017optimized,chi2021understanding,hardt2016equality,agarwal2019fair,feldman2015certifying,beutel2017data,lum2016statistical} under various contexts. However, the question of the price that we need to pay for enforcing various fairness definitions in terms of the accuracy of these tools is less explored. In this paper, we attempt to answer this question by characterizing a tradeoff between statistical parity and accuracy in the regression setting, where the regressor is prohibited to use the sensitive attribute directly. Among many definitions of fairness~\citep{verma2018fairness} in the literature, statistical parity asks the predictor to be statistically independent of a predefined protected attribute, \eg, race, gender, etc. While empirically it has long been observed that there is an underlying tension between accuracy and statistical parity~\citep{calders2013controlling,zliobaite2015relation,berk2017convex,agarwal2019fair} in both classification and regression settings, theoretical understanding of this tradeoff in regression is limited. In the case of classification, \citet{menon2018cost} explored such tradeoff in terms of the fairness frontier function under the context of cost-sensitive binary classification. \citet{zhao2019inherent} provided a characterization of such tradeoff in binary classification. Recently, \citet{chzhen2020fair} and \citet{le2020projection} concurrently derived an analytic bound to characterize the price of statistical parity in regression using Wasserstein barycentres when the learner can take the sensitive attribute explicitly as an input. 

In this paper, we derive the first lower bound to characterize the inherent tradeoff between fairness and accuracy in the regression setting under general $\ell_p$ loss when the regressor is prohibited to use the sensitive attribute directly during the inference stage. Our main theorem can be informally summarized as follows:
\begin{quote}
    \itshape
    For \emph{any} fair algorithm satisfying statistical parity, it has to incur a large error on at least one of the demographic subgroups when the moments of the target variable differ across groups. Furthermore, if the population of the two demographic subgroups are imbalanced, the minorities could still suffer from the reduction in accuracy even if the global accuracy does not seem to reduce.
\end{quote}
We emphasize that the above result holds in the noiseless setting as well, where there exists (unfair) algorithms that are perfect on both demographic subgroups. Hence it highlights the inherent tradeoff due to the coupling between statistical parity and accuracy in general, not due to the noninformativeness of the input. We also extend this result to the general noisy setting when only apporoximate fairness is required. Our bounds are algorithm-independent, and do not make any distributional assumptions. To illustrate the tightness of the lower bound, we also construct a problem instance where the lower bound is attained. In particular, it is easy to see that in an extreme case where the group membership coincides with the target task, a call for exact statistical parity will inevitably remove the perfect predictor. At the core of our proof technique is the use of the Wasserstein metric and its contraction property under certain Lipschitz assumption on the regression predictors. 

On the positive side, we establish the first connection between individual fairness~\citep{dwork2012fairness}, a more fine-grained notion of fairness, and accuracy parity. Roughly speaking, an algorithm is said to be individually fair if it treats similar individuals similarly. We show that if a regressor is individually fair, then it also approximately verifies the accuracy parity. Interestingly, the gap in this approximation is exactly given by the Wasserstein distance between the distributions across groups. 


Although our main focus is to understand the costs and benefits of using Wasserstein regularization for fair regression, our analysis using also naturally suggests a practical algorithm to achieve statistical parity and accuracy parity simultaneously in regression by learning fair representations. The idea is relatively simple and intuitive: it suffices if we can ensure that the representations upon which the regressor applies are approximately fair (measured by Wasserstein distance). Finally, we also conduct experiments on a real-world dataset to corroborate our theoretical findings. 


\section{Preliminaries}
\label{sec:preliminary}
\paragraph{Notation}
We consider a general regression setting where there is a joint distribution $\dist$ over the triplet $T = (X, A, Y)$, where $X\in\xxspace\subseteq\RR^d$ is the input vector, $A\in\{0, 1\}$\footnote{Our main results could be extended to the case where $A$ can take finitely many values.} is the protected attribute, e.g., race, gender, etc., and $Y\in\yyspace\subseteq [-1, 1]$ is the target output. Lower case letters $\xx$, $a$ and $y$ are used to denote the instantiation of $X$, $A$ and $Y$, respectively. Let $\HH$ be a hypothesis class of predictors from input to output space. Throughout the paper, we focus on the setting where the regressor \emph{cannot} directly use the sensitive attribute $A$ to form its prediction. However, note that even if the regressor does not explicitly take the protected attribute $A$ as input, this \emph{fairness through blindness} mechanism can still be biased due to the redundant encoding issue~\citep{barocas2017fairness}. To keep the notation uncluttered, for $a\in\{0, 1\}$, we use $\dist_a$ to mean the conditional distribution of $\dist$ given $A = a$. The zero-one entropy of $A$~\citep[Section 3.5.3]{grunwald2004game} is denoted as $\Hzo(A)\defeq 1 - \max_{a\in\{0, 1\}}\Pr(A = a)$. Furthermore, we use $F_{\dist}$ to represent the cumulative distribution function of $\dist$, i.e., for $z\in\RR$, $F_{\dist}(z)\defeq\Pr_\dist((-\infty, z])$. In this paper, we assume that the density of $\dist_i$ and its corresponding pushforward under proper transformation (w.r.t.\ the Lebesgue measure $\lambda$) is universally bounded above, \ie, $\|d\dist_i / d\lambda\|_\infty \leq C$, $\forall i\in\{0, 1\}$. Given a feature transformation function $g:\xxspace\to\zzspace$ that maps instances from the input space $\xxspace$ to feature space $\zzspace$, we define $g_\sharp\dist\defeq \dist\circ g^{-1}$ to be the induced distribution (pushforward) of $\dist$ under $g$, \ie, for any measurable event $E'\subseteq\zzspace$, $\Pr_{g_\sharp\dist}(E') \defeq \Pr_\dist(g^{-1}(E')) = \Pr_\dist(\{x\in\xxspace\mid g(x)\in E'\})$. We use $\dist_Y$ to denote the marginal distribution of $Y$ from a joint distribution $\dist$ over $Y$ and some other random variables. With slight abuse of notation, occasionally we also use $Y_\sharp\dist$ to denote the marginal distribution of $Y$ from the joint distribution $\dist$, \ie, projection of $\dist$ onto the $Y$ coordinate. Throughout the paper, we make the following assumption that the probability density of any continuous random variable to be bounded: 
\begin{assumption}
\label{assumption:bounded}
There exists a constant $C$ such that the density of $\dist'$ (w.r.t.\ the Lebesgue measure $\lambda$) is universally bounded above, i.e., $\|d\dist' / d\lambda\|_\infty \leq C$.
\end{assumption}

\paragraph{Fairness Definition}
We mainly focus on group fairness where the group membership is given by the protected attribute $A$. In particular, \emph{statistical parity} asks that the predictor should be statistically independent of the protected attribute. In binary classification, this requirement corresponds to the notion of equality of outcome~\citep{holzer2006affirmative}, and it says that the outcome rate should be equal across groups. 
\begin{definition}[Statistical Parity]
    \label{def:sp}
    Given a joint distribution $\dist$, a classifier $\Ypred = h(X)$, satisfies \emph{statistical parity} if $\Ypred$ is independent of $A$.
\end{definition}
Since $\Ypred$ is continuous, the above definition implies that $\Pr_{\dist_0}(\Ypred\in E) = \Pr_{\dist_1}(\Ypred\in E)$ for any measurable event $E\subseteq\RR$. Statistical parity has been adopted as definition of fairness in a series of work~\citep{calders2009building,edwards2015censoring,johndrow2019algorithm,kamiran2009classifying,kamishima2011fairness,louizos2015variational,zemel2013learning,madras2018learning}. 

\paragraph{Fair Regression}
Given a joint distribution $\dist$, the $\ell_p$ error of a predictor $\Ypred = h(X)$ under $\dist$ for $p\geq 1$ is defined as
\begin{equation}
    \varepsilon_{p,\dist}(\Ypred)\defeq \left(\Exp_{\dist}\left[|\Ypred - Y|^p\right]\right)^{1/p}.
    \label{equ:lp}
\end{equation}
As two notable special cases, when $p=2$, the above definition reduces to the square root of the usual mean-squared-error (MSE); when $p=1$,~\eqref{equ:lp} becomes the mean-absolute-error (MAE) of the predictor. To make the notation more compact, we may drop the subscript $\dist$ when it is clear from the context. The main departure from prior works on classification is that both $Y$ and $\Ypred (h(X))$ are allowed to be real-valued rather than just categorical. Under statistical parity, the problem of fair regression~\citep{agarwal2019fair} can be understood as the following constrained optimization problem:
\begin{equation}
    \label{equ:opt}
    \begin{aligned}
         & \underset{h\in\HH}{\text{minimize}} &  & \Exp_{\dist}\left[|h(X) - Y|^p\right]                                                            \\
         & \text{subject to}                   &  & \left|\Pr_{\dist_0}(h(X)\leq z) - \Pr_{\dist_1}(h(X)\leq z)\right|\leq\epsilon,~\forall z\in\RR.
    \end{aligned}
\end{equation}
Note that since $\Ypred = h(X)\in\RR$ is a real-valued random variable and $A$ is binary, the constraint in the above optimization formulation asks that the conditional cumulative distributions of $\Ypred$ are approximately equal across groups, which is an additive approximation to the original definition of statistical parity. Formally, the constraint in~\eqref{equ:opt} is known as the Kolmogorov-Smirnov distance:
\begin{definition}[Kolmogorov-Smirnov distance]
    For two probability distributions $\dist$ and $\dist'$ over $\RR$, the \emph{Kolmogorov-Smirnov distance} $K(\dist, \dist')$ is $K(\dist, \dist')\defeq\sup_{z\in\RR} |F_\dist(z) - F_{\dist'}(z)|$.
\end{definition}
With the Kolmogorov-Smirnov distance, we can define the $\epsilon$-statistical parity for a regressor $h$:
\begin{definition}[$\epsilon$-Statistical Parity]
    \label{def:esp}
    Given a joint distribution $\dist$ and $0 \leq \epsilon \leq 1$, a regressor $\Ypred = h(X)$, satisfies $\epsilon$-\emph{statistical parity} if $K(h_\sharp\dist_0, h_\sharp\dist_1)\leq\epsilon$.
\end{definition}
Clearly, the slack variable $\epsilon$ controls the quality of approximation and when $\epsilon = 0$ it reduces to asking exact statistical parity as defined in Definition~\ref{def:sp}. 

\paragraph{Wasserstein Distance}
Given two random variables $T$ and $T'$ with the corresponding distributions $\dist$ and $\dist'$, let $\Gamma(\dist, \dist')$ denote the set of all couplings $\gamma$ of $\dist$ and $\dist'$, i.e., $\gamma_T = \dist$ and $\gamma_{T'} = \dist'$. The \emph{Wasserstein distance} between the pair of distributions $\dist$ and $\dist'$ is defined as follows:
\begin{equation}
    W_p(\dist, \dist')\defeq \left(\inf_{\gamma\in\Gamma(\dist, \dist')}\int \|T - T'\|^p~d\gamma\right)^{1/p},
\end{equation}
where $p \geq 1$ and throughout this paper we fix $\|\cdot\|$ to be the $\ell_2$ norm. For the special case where both $\dist$ and $\dist'$ are distributions over $\RR$, the Wasserstein distance $W_p(\dist,\dist')$ admits the following equivalent characterization~\citep{kolouri2017optimal}:
\begin{equation}
    W_p(\dist, \dist') = \left(\int_0^1 |F_{\dist}^{-1}(t) - F_{\dist'}^{-1}(t)|^p~dt\right)^{1/p},
    \label{equ:quantile}
\end{equation}
where $F_{\dist}^{-1}(t)$ denotes the generalized inverse of the cumulative distribution function, i.e., $F_\dist^{-1}(t) = \inf_{z\in\RR}\{z: F(z) \geq t\}$. The above closed form formulation will be particularly useful in our later analysis. When $p = 1$, the Wasserstein distance is also called the \emph{Earth Mover distance}, and it admits a dual representation in a variational form using $\sup$ rather than $\inf$: $W_1(\dist, \dist') = \sup_{f:\|f\|_L\leq 1}\left|\int f~d\dist - \int f~d\dist'\right|$, where $\|f\|_L\defeq \sup_{\xx\neq\xx'}|f(\xx) - f(\xx')|/|\xx - \xx'|$ is the Lipschitz seminorm of $f$. It is well-known that convergences of measures under the Wasserstein distance implies weak convergence, i.e., convergence in distribution~\citep{gibbs2002choosing}. Furthermore, compared with other distance metrics including total variation (TV), Jensen-Shannon distance, etc.\ that ignore the geometric structure of the underlying space, Wasserstein distance often allows for more robust applications, e.g., the Wasserstein GAN~\citep{arjovsky2017wasserstein}, domain adaptation~\citep{courty2017joint}, etc., due to the its Lipschitz continuous constraint in the dual representation. Moreover, unlike the KL divergence, the Wasserstein distance between two measures is generally finite even when neither measure is absolutely continuous with respect to the other, a situation that often arises when considering empirical distributions arising in practice. Furthermore, unlike the TV-distance, the Wasserstein distance inherently depends on the geometry of the underlying space, whereas the TV distance is invariant under any bijective mapping. 

\section{Main Results}
\label{sec:main}
Recently, \citet{agarwal2019fair} proposed a reduction-based approach to tackle~\eqref{equ:opt} by solving a sequence of cost-sensitive problems. By varying the slack variable $\epsilon$, the authors also empirically verified the unavoidable tradeoff between statistical parity and accuracy in practice. However, to the best of our knowledge, a quantitative characterization on the exact tradeoff between fairness and accuracy is still missing. In this section, we seek to answer the following intriguing and important question:
\begin{quote}
    \itshape
    In the setting of regression, what is the minimum error that any fair algorithm has to incur, and how does this error depend on the coupling between the target and the protected attribute?
\end{quote}
In what follows we shall first provide a simple example to illustrate this tradeoff. This example will give readers a flavor the kind of impossibility result we are interested in proving. We then proceed to formally present our first theorem which exactly answers the above question, even if only approximate fairness is satisfied. We conclude this section by some discussions on the implications of our results.

\paragraph{A Simple Example}
As a warm-up, let us consider an example to showcase the potential tradeoff between statistical parity and accuracy. But before our construction, it should be noted that the error $\eps_{p,\dist}(\Ypred)$ bears an intrinsic lower bound for any deterministic predictor $\Ypred = h(X)$, i.e., the noise in the underlying data distribution $\dist$. Hence to simplify our discussions, in this example we shall construct distributions such that there is no noise in the data, i.e., for $a\in\{0, 1\}$, there exists a ground-truth labeling function $h_a^*$ such that $Y = h_a^*(X)$ on $\dist_a$. Realize that such simplification will only make it harder for us to prove lower bound on $\eps_{p,\dist_a}$ since there exists predictors that are perfect.
\begin{example}[Target coincides with the protected attribute]
    \label{exp:simple}
    For $a\in\{0, 1\}$, let the marginal distribution $X_\sharp\dist_a$ be a uniform distribution over $\{0, 1\}$. Let $Y = a$ be a constant. Hence by construction, on the joint distribution, we have $Y = A$ hold. Now for any fair predictor $\Ypred = h(X)$, the statistical parity asks $\Ypred$ to be independent of $A$.
    However, no matter what value $h(x)$ takes, we always have $|h(x)| + |h(x) - 1| \geq 1$. Hence for any predictor $h:\xxspace\to\RR$:
    \begin{equation*}
        \eps_{1,\dist_0}(h) + \eps_{1, \dist_1}(h) = \frac{1}{2}|h(0) - 0| + \frac{1}{2}|h(1) - 0| + \frac{1}{2}|h(0) - 1| + \frac{1}{2}|h(1) - 1| \geq \frac{1}{2} + \frac{1}{2} = 1.
    \end{equation*}
    This shows that for any fair predictor $h$, the sum of $\ell_1$ errors of $h$ on both groups has to be at least 1. 
    On the other hand, there exists a trivial unfair algorithm that makes no error on both groups by also taking the protected attribute into consideration: $\forall x\in\{0, 1\}, h^*(x) = 0$ if $A = 0$ else $h^*(x) = 1$.
\end{example}

\subsection{The Cost of Statistical Parity under Noiseless Setting}
\label{sec:first}
The example in the previous section corresponds to a worst case where $Y = A$. On the other hand, it is also clear that when the target variable $Y$ is indeed independent of the protected attribute $A$, there will be no tension between statistical parity and accuracy. The following theorem exactly characterizes the tradeoff between fairness and accuracy by taking advantage of the relationship between $Y$ and $A$:
\begin{restatable}{theorem}{exactlowerbound}
    \label{thm:exact}
    Let $\Ypred = h(X)$ be a predictor. If $\Ypred$ satisfies statistical parity, then $\forall p\geq 1$,
    \begin{equation}
        \eps_{p,\dist_0}(\Ypred) + \eps_{p,\dist_1}(\Ypred) \geq W_p(Y_\sharp\dist_0, Y_\sharp\dist_1).
        \label{equ:lowerbound}
    \end{equation}
\end{restatable}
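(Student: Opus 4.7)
The plan is to exploit the fact that statistical parity forces the two pushforwards $h_\sharp\dist_0$ and $h_\sharp\dist_1$ to coincide, and then route the distance between the target marginals $Y_\sharp\dist_0$ and $Y_\sharp\dist_1$ through these coincident predictor distributions via the triangle inequality for $W_p$. The error $\eps_{p,\dist_a}(\Ypred)$ will serve as an upper bound on the Wasserstein distance between the predictor marginal and the target marginal on group $a$, simply because the joint law of $(h(X),Y)$ on $\dist_a$ furnishes a specific coupling of $h_\sharp\dist_a$ and $Y_\sharp\dist_a$.

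Concretely, first I would observe that statistical parity (Definition \ref{def:sp}) implies $h_\sharp\dist_0 = h_\sharp\dist_1$ as distributions on $\RR$, so in particular $W_p(h_\sharp\dist_0, h_\sharp\dist_1) = 0$. Second, for each $a\in\{0,1\}$, the pair $(h(X), Y)$ under $\dist_a$ has marginals $h_\sharp\dist_a$ and $Y_\sharp\dist_a$, so it belongs to $\Gamma(h_\sharp\dist_a, Y_\sharp\dist_a)$; taking the infimum in the definition of $W_p$ gives
\begin{equation*}
\eps_{p,\dist_a}(\Ypred) = \bigl(\Exp_{\dist_a}[|h(X)-Y|^p]\bigr)^{1/p} \geq W_p(h_\sharp\dist_a, Y_\sharp\dist_a).
\end{equation*}

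Third, since $W_p$ is a bona fide metric on the space of probability measures with finite $p$-th moment, the triangle inequality yields
\begin{equation*}
W_p(Y_\sharp\dist_0, Y_\sharp\dist_1) \leq W_p(Y_\sharp\dist_0, h_\sharp\dist_0) + W_p(h_\sharp\dist_0, h_\sharp\dist_1) + W_p(h_\sharp\dist_1, Y_\sharp\dist_1),
\end{equation*}
and the middle term vanishes by statistical parity. Combining with the previous step gives exactly the claimed bound~\eqref{equ:lowerbound}.

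There is essentially no hard step here: the only subtlety is making sure the Wasserstein distances involved are finite (which follows from $Y$ being bounded in $[-1,1]$ and from Assumption~\ref{assumption:bounded} controlling the densities), and invoking the triangle inequality for $W_p$, which is standard. The conceptual content is that statistical parity collapses the two predictor marginals into a single distribution, which must then act as a "bottleneck" between the two target marginals, and any regressor pays the full cost of transporting through this bottleneck. I would note that the same argument extends verbatim to an $\epsilon$-statistical parity version (later in the paper) once one relates the Kolmogorov--Smirnov slack to a Wasserstein slack on $h_\sharp\dist_0$ and $h_\sharp\dist_1$.
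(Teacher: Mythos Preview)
Your proposal is correct and follows essentially the same argument as the paper's own proof: apply the triangle inequality for $W_p$ to the chain $Y_\sharp\dist_0 \to h_\sharp\dist_0 \to h_\sharp\dist_1 \to Y_\sharp\dist_1$, use statistical parity to kill the middle term, and bound each remaining leg by $\eps_{p,\dist_a}(\Ypred)$ via the observation that the joint law of $(h(X),Y)$ under $\dist_a$ is a valid coupling. The only difference is the order in which you present the two ingredients, which is immaterial.
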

We provide a proof by picture to illustrate the high-level idea of the proof in Fig.~\ref{fig:proof}. For the special case of $p = 1$ and $p = 2$, Theorem~\ref{thm:exact} gives the following lower bounds on the sum of MAE and MSE on both groups respectively:
\begin{restatable}{corollary}{corexact}
    If $\Ypred$ satisfies statistical parity, then $\eps_{1,\dist_0}(\Ypred) + \eps_{1,\dist_1}(\Ypred) \geq |\Exp_{\dist_0}[Y] - \Exp_{\dist_1}[Y]|$ and $\eps^2_{2,\dist_0}(\Ypred) + \eps^2_{2,\dist_1}(\Ypred) \geq \frac{1}{2}|\Exp_{\dist_0}[Y] - \Exp_{\dist_1}[Y]|^2$.
    \label{cor:exact}
\end{restatable}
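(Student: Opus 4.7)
The plan is to derive both inequalities directly from Theorem~\ref{thm:exact} by lower bounding the Wasserstein distance on its right-hand side by the absolute difference of the group-conditional means $|\Exp_{\dist_0}[Y] - \Exp_{\dist_1}[Y]|$. Conceptually, the corollary is the specialization of the main theorem obtained by choosing a single $1$-Lipschitz test function (the identity) in the variational characterization of $W_p$, so no new machinery is needed.

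For the $p=1$ bound, I would invoke the Kantorovich--Rubinstein dual representation $W_1(\dist,\dist') = \sup_{\|f\|_L\leq 1}\bigl|\int f\,d\dist - \int f\,d\dist'\bigr|$ recalled in the preliminaries. Since the identity map $f(y)=y$ is $1$-Lipschitz on $\RR$, it is an admissible test function, so $W_1(Y_\sharp\dist_0, Y_\sharp\dist_1)\geq |\Exp_{\dist_0}[Y] - \Exp_{\dist_1}[Y]|$. Plugging this into Theorem~\ref{thm:exact} at $p=1$ yields the first inequality immediately.

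For the $p=2$ bound, I would combine two elementary ingredients. First, for any coupling $\gamma\in\Gamma(Y_\sharp\dist_0, Y_\sharp\dist_1)$, Jensen's inequality (applied to $t\mapsto t^2$) gives $\bigl(\Exp_\gamma[(Y-Y')^2]\bigr)^{1/2}\geq \Exp_\gamma|Y-Y'|\geq |\Exp_\gamma Y - \Exp_\gamma Y'| = |\Exp_{\dist_0}[Y] - \Exp_{\dist_1}[Y]|$, where the last equality uses that the marginals of $\gamma$ are $Y_\sharp\dist_0$ and $Y_\sharp\dist_1$; taking the infimum over $\gamma$ shows $W_2(Y_\sharp\dist_0, Y_\sharp\dist_1)\geq |\Exp_{\dist_0}[Y] - \Exp_{\dist_1}[Y]|$. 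Second, applying the elementary inequality $(a+b)^2\leq 2(a^2+b^2)$ with $a=\eps_{2,\dist_0}(\Ypred)$ and $b=\eps_{2,\dist_1}(\Ypred)$ to the squared version of Theorem~\ref{thm:exact} converts the sum-of-errors lower bound into a sum-of-squared-errors lower bound, namely $\eps^2_{2,\dist_0}(\Ypred)+\eps^2_{2,\dist_1}(\Ypred)\geq \tfrac12\, W_2^2(Y_\sharp\dist_0, Y_\sharp\dist_1)$. Chaining the two inequalities produces the factor $\tfrac12$ in the statement.

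No substantive obstacle is anticipated: both statements are essentially one-line consequences of Theorem~\ref{thm:exact} combined with standard properties of the Wasserstein metric. The only minor technical point is the loss of a factor of $\tfrac12$ in the $p=2$ case, which is a direct artefact of the AM--QM step used to pass from the sum of root-mean-squared errors produced by the theorem to the sum of mean-squared errors stated in the corollary.
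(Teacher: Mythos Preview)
Your proposal is correct and follows essentially the same route as the paper. The only cosmetic difference is in the $p=2$ case: the paper first reduces to $\ell_1$ errors via $\eps_{2,\dist_a}\geq\eps_{1,\dist_a}$ (Jensen) and then invokes the already-proved $p=1$ bound, whereas you lower-bound $W_2$ directly by the mean difference before applying the same AM--QM step---the ingredients are identical, just reordered.
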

\paragraph{Remark}
First of all, the lower bound $W_p(Y_\sharp\dist_0, Y_\sharp\dist_1)$ corresponds to a measure of the distance between the marginal distributions of $Y$ conditioned on $A = 0$ and $A = 1$ respectively. Hence when $A$ is independent of $Y$, we will have $Y_\sharp\dist_0 = Y_\sharp\dist_1$ so that the lower bound gracefully reduces to 0, \ie, no essential tradeoff between fairness and accuracy. On the other extreme, consider $Y = cA$, where $c > 0$. In this case $A$ fully describes $Y$ and it is easy to verify that $W_p(Y_\sharp\dist_0, Y_\sharp\dist_1) = c$, which means the lower bound also takes into account the magnitude of the target variable $Y$. For a protected attribute $A$ that takes more than 2 values, we could extend Theorem~\ref{thm:exact} by considering all possible pairwise lower bounds and average over them. Furthermore, the lower bound is sharp, in the sense that there exists problem instances that achieve the above lower bound, \eg, Example~\ref{exp:simple}. As another example, consider the following Gaussian case:
\begin{example}[Gaussian case]
    \label{exp:gaussian}
    For $a\in\{0, 1\}$, let the marginal distribution $X_\sharp\dist_a$ be a standard Gaussian distribution $\mathcal{N}(0, I_d)$ and assume $A\perp X$. Fix $w\in\RR^d$ with $\|w\| = 1$, and construct $Y_0 = w^T X - 1$ and $Y_1 = w^TX + 1$. Now for any regressor $\Ypred = h(X)$, due to the data-processing inequality, $\Ypred \perp A$ so $\Ypred$ is fair. However, consider the $\ell_2$ error of $h$ on both groups:
    \begin{align*}
        \eps_{2,\dist_0}(h) + \eps_{2, \dist_1}(h) &= \Exp_{X}^{1/2}[(h(X) - Y_0)^2] + \Exp_X^{1/2}[(h(X) - Y_1)^2] \\
        &\geq \Exp_{X}[|h(X) - Y_0|] + \Exp_X[|h(X) - Y_1|] \geq \Exp_{X}[|Y_0 - Y_1|] = 2.
    \end{align*}
    On the other hand, note that the distributions of $Y_0$ and $Y_1$ are $\mathcal{N}(-1, 1)$ and $\mathcal{N}(1, 1)$, respectively. The analytic formula~\citep[Proposition 7]{givens1984class} for the $W_2$ distance between two Gaussians $\mathcal{N}(m_0, \Sigma_0)$ and $\mathcal{N}(m_1, \Sigma_1)$ is
    \begin{equation*}
        W^2_2(\mathcal{N}(m_0, \Sigma_0), \mathcal{N}(m_1, \Sigma_1)) = \|m_0 - m_1\|^2 + \tr\left(\Sigma_0 + \Sigma_1 - 2\left(\Sigma_0^{1/2}\Sigma_1\Sigma_0^{1/2}\right)^{1/2}\right),
    \end{equation*}
    which shows that $W_2(Y_0, Y_1) = |-1 - 1| = 2$. Further, consider $\Ypred^* = h^*(X) = w^TX$, then 
    \begin{equation*}
     \eps_{2,\dist_0}(h^*) + \eps_{2, \dist_1}(h^*) = \Exp_{X}^{1/2}[(h^*(X) - Y_0)^2] + \Exp_X^{1/2}[(h^*(X) - Y_1)^2] = 1 + 1 = 2.    
    \end{equation*}
    Hence $h^*$ achieves the lower bound and the lower bound is verified. 
\end{example}

It is worth pointing out that the lower bound in Theorem~\ref{thm:exact} is algorithm-independent and it holds on the population distribution. That being said, by using recent tail bounds~\citep{lei2020convergence,weed2019sharp} on the expected Wasserstein distance between the empirical distributions and its population counterpart, it is not hard to extend Theorem~\ref{thm:exact} to obtain a finite sample high probability bound of Theorem~\ref{thm:exact}:
\begin{restatable}{theorem}{finite}
\label{thm:finite}
    Let $\Ypred = h(X)$ be the predictor and $\hat{\dist}$ be an empirical distribution induced from a sample of size $n$ drawn from $\dist$. If $\Ypred$ satisfies statistical parity, then there exists an absolute constant $c_1 > 0$ such that for $0 < \delta < 1$, with probability at least $1-\delta$ over the draw of the sample, 
    \begin{equation}
    \small
        \eps_{2,\dist_0}(\Ypred) + \eps_{2,\dist_1}(\Ypred) \geq \eps_{1,\dist_0}(\Ypred) + \eps_{1,\dist_1}(\Ypred) \geq W_1(Y_\sharp\hat{\dist}_0, Y_\sharp\hat{\dist}_1) - \left(2c_1 + \sqrt{2\log(2/\delta)}\right)\sqrt{\frac{1}{n}}.
        \label{equ:fl1}
    \end{equation}
\end{restatable}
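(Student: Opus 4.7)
The plan is to reduce Theorem~\ref{thm:finite} to the population bound of Theorem~\ref{thm:exact} via a standard empirical-to-population concentration argument for the one-dimensional Wasserstein distance. The chain of inequalities in~\eqref{equ:fl1} has three links, and I would prove them in sequence, from left to right.

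First, I would establish the easy inequality $\eps_{2,\dist_a}(\Ypred)\ge\eps_{1,\dist_a}(\Ypred)$ for $a\in\{0,1\}$, which is just Jensen's inequality applied to the convex function $t\mapsto t^2$ (equivalently, the monotonicity of $\ell_p$ norms in $p$ for probability measures): $\eps_{1,\dist_a}(\Ypred)=\Exp_{\dist_a}[|\Ypred-Y|]\le(\Exp_{\dist_a}[|\Ypred-Y|^2])^{1/2}=\eps_{2,\dist_a}(\Ypred)$. Summing over $a$ gives the first $\geq$ in~\eqref{equ:fl1}.

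Second, I would invoke Theorem~\ref{thm:exact} with $p=1$ to obtain the population-level bound
\[
\eps_{1,\dist_0}(\Ypred)+\eps_{1,\dist_1}(\Ypred)\;\ge\;W_1(Y_\sharp\dist_0,\,Y_\sharp\dist_1).
\]
To replace the population Wasserstein with its empirical counterpart, I would use the triangle inequality for $W_1$ (it is a genuine metric on probability distributions with finite first moment):
\[
W_1(Y_\sharp\dist_0,Y_\sharp\dist_1)\;\ge\;W_1(Y_\sharp\hat{\dist}_0,Y_\sharp\hat{\dist}_1)-W_1(Y_\sharp\dist_0,Y_\sharp\hat{\dist}_0)-W_1(Y_\sharp\dist_1,Y_\sharp\hat{\dist}_1).
\]
Thus it suffices to show that each empirical-to-population gap $W_1(Y_\sharp\dist_a,Y_\sharp\hat{\dist}_a)$ is of order $1/\sqrt{n}$ with high probability, then combine the two high-probability events by a union bound.

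Third, for the concentration step I would appeal to the recent sharp rates for empirical Wasserstein convergence in one dimension (e.g.\ Lei 2020 or Weed--Bach 2019): since $Y_\sharp\dist_a$ is supported on $[-1,1]$ and has bounded density by Assumption~\ref{assumption:bounded}, there is an absolute constant $c_1$ with $\Exp\,W_1(Y_\sharp\dist_a,Y_\sharp\hat{\dist}_a)\le c_1/\sqrt{n}$. For the deviation around this expectation, I would use McDiarmid's inequality: the map $(y_1,\dots,y_n)\mapsto W_1(Y_\sharp\dist_a,Y_\sharp\hat{\dist}_a)$ has bounded differences $2/n$ (changing one sample moves the empirical CDF by at most $1/n$ on an interval of length at most $2$, and $W_1$ on $\RR$ is the $L^1$ gap of CDFs), so with probability at least $1-\delta/2$, $W_1(Y_\sharp\dist_a,Y_\sharp\hat{\dist}_a)\le c_1/\sqrt{n}+\sqrt{2\log(2/\delta)/n}$. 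A union bound over $a\in\{0,1\}$ yields the claimed slack $(2c_1+\sqrt{2\log(2/\delta)})/\sqrt{n}$.

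The main obstacle is pinning down the correct constant and the correct concentration statement: the bound must hold for both subgroups simultaneously, and the rate $1/\sqrt{n}$ (rather than a slower $n^{-1/2}\log n$ or worse) relies on the target being one-dimensional and having bounded density, which is exactly where Assumption~\ref{assumption:bounded} and the existing $c_1$ in the cited convergence results come in; the rest of the argument is a routine triangle-inequality plus McDiarmid calculation.
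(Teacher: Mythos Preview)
Your approach is essentially identical to the paper's: Jensen for $\eps_{2}\ge\eps_{1}$, Theorem~\ref{thm:exact} at $p=1$, the triangle inequality on $W_1$ to swap population for empirical, and then the Lei/Weed--Bach expectation bound plus a McDiarmid-type concentration per group combined by a union bound (the paper simply cites Weed--Bach's concentration proposition $\Pr(W_1\ge\Exp W_1+t)\le\exp(-2nt^2)$ rather than redoing McDiarmid, and does not actually use Assumption~\ref{assumption:bounded} here---bounded support alone gives the $c_1/\sqrt{n}$ rate). One minor slip: with bounded differences $2/n$ your per-group deviation is $\sqrt{2\log(2/\delta)/n}$, and summing over the two groups gives $2\sqrt{2\log(2/\delta)/n}$, not the single copy you wrote; the paper obtains the stated constant because the cited Weed--Bach tail is for diameter normalized to one, yielding per-group deviation $\sqrt{\log(2/\delta)/(2n)}$ and hence total $\sqrt{2\log(2/\delta)/n}$.
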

\paragraph{Remark}
It is possible to obtain better lower bounds for the $\ell_2$ error in Theorem~\ref{thm:finite}, but that requires making more assumptions on the underlying distribution $\dist$, \eg, strongly log-concave density. The first term in the lower bound, $W_1(Y_\sharp\hat{\dist}_0, Y_\sharp\hat{\dist}_1)$, could be efficiently estimated from the data by solving a linear program~\citep[Problem (3)]{cuturi2014fast}. Furthermore, it is worth pointing out that the lower bound in Theorem~\ref{thm:finite} applies to all the predictors $\Ypred$ and is insensitive to the marginal distribution of $A$. As a comparison, let $\alpha\defeq \Pr_\dist(A = 0)$, then $\eps_{p,\dist}(\Ypred) = \alpha \eps_{p,\dist_0}(\Ypred) + (1-\alpha)\eps_{p,\dist_1}(\Ypred)$. In this case if the group ratio is imbalanced, the overall error $\eps_{p,\dist}(\Ypred)$ could still be small even if the minority group suffers a large error. Using Theorem~\ref{thm:exact}, we can also bound the joint error over all the population:
\begin{figure}[tb]
    \centering
    \includegraphics[width=0.6\linewidth]{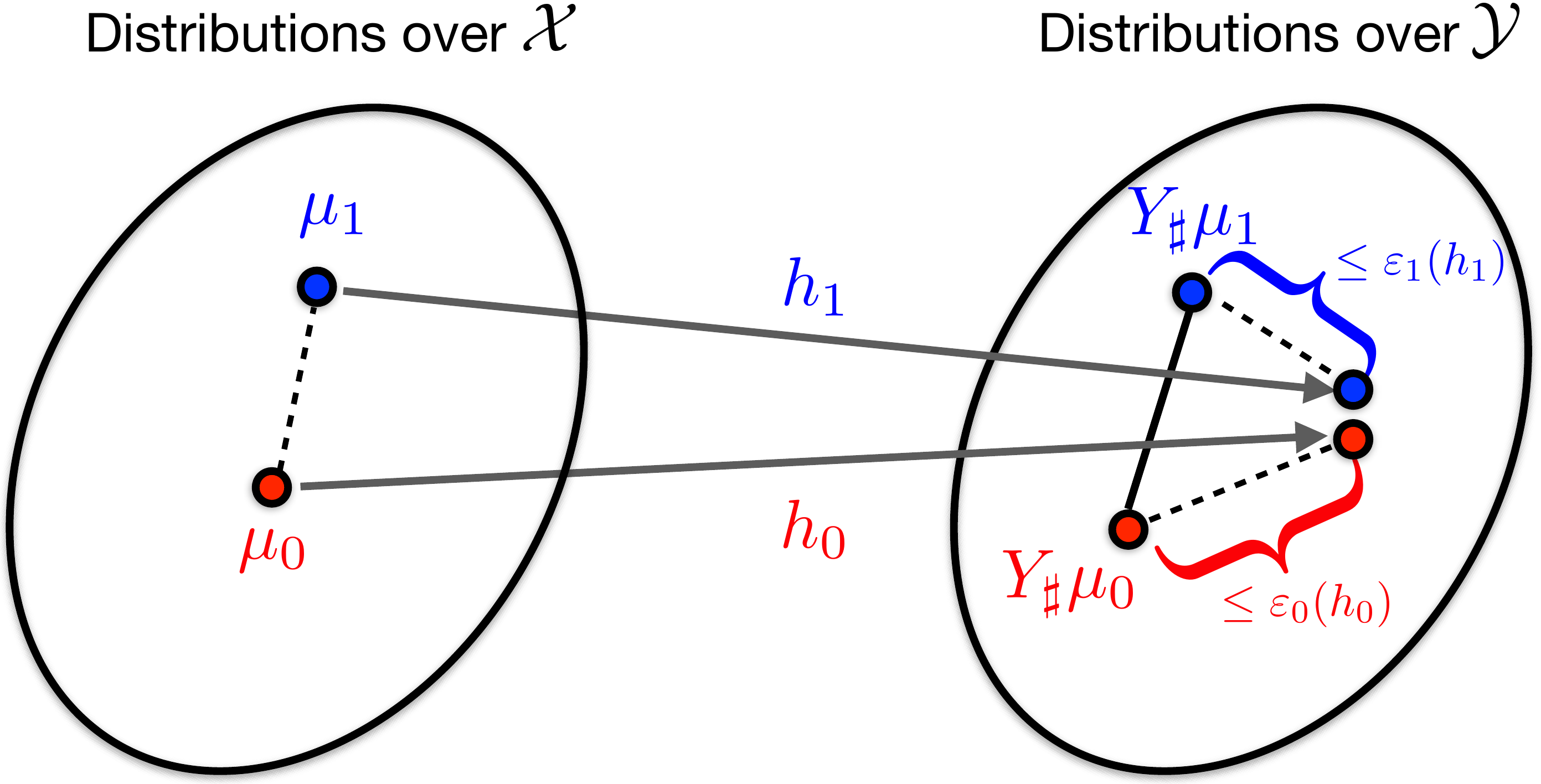}
    \caption{Proof by picture. The predictors on $\dist_0$ and $\dist_1$ induce two predictive distributions over $\yyspace$. Applying a chain of triangle inequalities (with $W_p(\cdot,\cdot)$) to the quadrilateral in the right circle and using the fact that the Wasserstein distance is a lower bound of the regression error then completes the proof. Note that here the predictors $h_0$ and $h_1$ over $\dist_0$ and $\dist_1$ need not to be the same.}
    \label{fig:proof}
\end{figure}
\begin{restatable}{corollary}{jointlowerbound}
    \label{cor:joint}
    Let $\Ypred = h(X)$ be a predictor. If $\Ypred$ satisfies statistical parity, then $\forall p\geq 1$, the joint error has the following lower bound:
    \begin{equation}
        \eps_{p,\dist}(\Ypred) \geq \Hzo(A)\cdot W_p(Y_\sharp\dist_0, Y_\sharp\dist_1).
    \end{equation}
\end{restatable}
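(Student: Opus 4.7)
The plan is to combine a convexity inequality with Theorem~\ref{thm:exact} in a very direct way. Let $\alpha \defeq \Pr_\dist(A=0)$, so that $\Hzo(A) = \min(\alpha, 1-\alpha)$; by symmetry I may assume without loss of generality that $\alpha \leq 1/2$, i.e.\ $\Hzo(A) = \alpha$. The starting point is the law of total expectation, which gives
\begin{equation*}
\eps_{p,\dist}^p(\Ypred) \;=\; \Exp_\dist\!\left[|\Ypred-Y|^p\right] \;=\; \alpha\,\eps_{p,\dist_0}^p(\Ypred) \;+\; (1-\alpha)\,\eps_{p,\dist_1}^p(\Ypred).
\end{equation*}

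Next, I would apply Jensen's inequality to the convex function $x\mapsto x^p$ (for $p\geq 1$, and $x\geq 0$). Concretely, convexity yields
\begin{equation*}
\alpha\,\eps_{p,\dist_0}^p + (1-\alpha)\,\eps_{p,\dist_1}^p \;\geq\; \bigl(\alpha\,\eps_{p,\dist_0} + (1-\alpha)\,\eps_{p,\dist_1}\bigr)^p,
\end{equation*}
and taking the $p$-th root of both sides of the previous display gives $\eps_{p,\dist}(\Ypred) \geq \alpha\,\eps_{p,\dist_0}(\Ypred) + (1-\alpha)\,\eps_{p,\dist_1}(\Ypred)$. Since the errors are nonnegative and $\alpha \leq 1-\alpha$, this weighted sum is at least $\alpha\,(\eps_{p,\dist_0}(\Ypred) + \eps_{p,\dist_1}(\Ypred))$.

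Finally, I would invoke Theorem~\ref{thm:exact} to bound $\eps_{p,\dist_0}(\Ypred) + \eps_{p,\dist_1}(\Ypred)$ from below by $W_p(Y_\sharp\dist_0, Y_\sharp\dist_1)$. Chaining the three inequalities gives
\begin{equation*}
\eps_{p,\dist}(\Ypred) \;\geq\; \alpha\, W_p(Y_\sharp\dist_0, Y_\sharp\dist_1) \;=\; \Hzo(A)\cdot W_p(Y_\sharp\dist_0, Y_\sharp\dist_1),
\end{equation*}
which is the desired bound.

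The only step that requires a bit of care is the Jensen step, since a naive reading of the law of total expectation would suggest that $\eps_{p,\dist}$ itself (rather than its $p$-th power) is the convex combination of $\eps_{p,\dist_0}$ and $\eps_{p,\dist_1}$ weighted by the group frequencies. This is false in general for $p>1$, but convexity of $x\mapsto x^p$ ensures that $\eps_{p,\dist}$ still dominates that convex combination, which is exactly what the argument needs. Beyond this observation the proof is essentially a one-line application of Theorem~\ref{thm:exact}, and no additional assumptions on $\dist$ are required.
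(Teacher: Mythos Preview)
Your proof is correct and follows essentially the same route as the paper: pass from the joint error to the convex combination $\alpha\,\eps_{p,\dist_0}+(1-\alpha)\,\eps_{p,\dist_1}$, lower-bound this by $\min(\alpha,1-\alpha)\bigl(\eps_{p,\dist_0}+\eps_{p,\dist_1}\bigr)$, and invoke Theorem~\ref{thm:exact}. In fact your Jensen step is a genuine improvement in rigor: the paper writes $\eps_{p,\dist}=\alpha\,\eps_{p,\dist_0}+(1-\alpha)\,\eps_{p,\dist_1}$ ``by definition of the joint error,'' which is literally true only for $p=1$ (for general $p$ it is the $p$-th powers that decompose additively), whereas your convexity argument correctly establishes the needed inequality $\eps_{p,\dist}\geq \alpha\,\eps_{p,\dist_0}+(1-\alpha)\,\eps_{p,\dist_1}$ for all $p\geq 1$.
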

Compared with the one in Theorem~\ref{thm:exact}, the lower bound of the joint error in Corollary~\ref{cor:joint} additionally depends on the zero-one entropy of $A$. In particular, if the marginal distribution of $A$ is skewed, then $\Hzo(A)$ will be small, which means that fairness will not reduce the joint accuracy too much. It is instructive to compare the above lower bound for the population error with the one of \citep[Theorem 2.3]{chzhen2020fair}, where the authors use a Wasserstein barycenter characterization to give the lower bound on the special case of $\ell_2$ error $(p=2)$ when the regressor can explicitly take the protected attribute as its input. As a comparison, our results apply to the general $\ell_p$ loss. In the special case of binary sensitive attribute with $\ell_2$ error, we have the following proposition hold:
\begin{restatable}{proposition}{comparison}(Informal)
\label{prop:comparison}
    Under the noiseless setting, the price (reduction of accuracy) paid by a fair regressor with explicit access to $A$ is higher than that of a fair regressor without access to $A$. 
\end{restatable}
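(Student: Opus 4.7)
The plan is to combine the Wasserstein barycenter characterization of \citet{chzhen2020fair} for the with-$A$ case with Corollary~\ref{cor:joint} and a Jensen-type contraction for the without-$A$ case, reducing both prices to a common Wasserstein quantity and comparing the multiplicative constants.

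First, in the noiseless setting the unconstrained regressor with explicit access to $A$ incurs zero error, as it can realize the underlying ground-truth mapping $h^{*}(X, A)$. Consequently the price of fairness with $A$-access coincides with the minimal fair $\ell_{2}^{2}$ risk. By the characterization of \citet{chzhen2020fair}, this minimum equals $\sum_{a\in\{0,1\}} p_{a} W_{2}^{2}(Y_{\sharp}\dist_{a}, \nu^{*})$, where $\nu^{*}$ is the 2-Wasserstein barycenter of $\{Y_{\sharp}\dist_{a}\}_{a}$ with weights $p_{a} := \Pr_{\dist}(A = a)$. Since the 2-Wasserstein barycenter of two measures on $\RR$ lies on the displacement geodesic joining them, a direct computation collapses the sum to $\text{price-with-}A = p_{0} p_{1} W_{2}^{2}(Y_{\sharp}\dist_{0}, Y_{\sharp}\dist_{1})$.

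Second, for the fair regressor without access to $A$ I would exhibit an explicit competitor $\tilde h$ and upper bound its excess risk. Let $\bar h(X) := \Exp[Y \mid X]$, so that $\bar h$ is optimal among $X$-measurable predictors and has risk $R_{0} := \Exp[\Var(Y \mid X)]$. Define $\tilde h$ as the monotone (quantile) rearrangement of $\bar h(X)$ toward the 2-Wasserstein barycenter $\tilde \nu$ of the pushforwards $\{\bar h_{\sharp}\dist_{a}\}_{a}$; this produces a function of $X$ alone whose conditional laws on the two groups coincide. A Pythagoras-type decomposition---using that $\tilde h - \bar h$ is $X$-measurable while $\bar h - Y$ has conditional mean zero given $X$---yields $\Exp[(\tilde h(X) - Y)^{2}] = R_{0} + p_{0} p_{1} W_{2}^{2}(\bar h_{\sharp}\dist_{0}, \bar h_{\sharp}\dist_{1})$, so the price-without-$A$ is at most $p_{0} p_{1} W_{2}^{2}(\bar h_{\sharp}\dist_{0}, \bar h_{\sharp}\dist_{1})$.

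Third, I would close the argument by the Jensen-type contraction $W_{2}(\bar h_{\sharp}\dist_{0}, \bar h_{\sharp}\dist_{1}) \leq W_{2}(Y_{\sharp}\dist_{0}, Y_{\sharp}\dist_{1})$, obtained by lifting an optimal coupling of $(Y \mid A = 0, Y \mid A = 1)$ to a coupling of the conditional laws of $\bar h(X) = \Exp[Y \mid X]$ through the regular conditional distribution of $X$ given $(Y, A)$; Jensen's inequality applied inside the second moment shows the lifted coupling has no larger cost than the original. Chaining the three displays yields $\text{price-without-}A \leq p_{0} p_{1} W_{2}^{2}(Y_{\sharp}\dist_{0}, Y_{\sharp}\dist_{1}) = \text{price-with-}A$, the claim.

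The hard part will be step two, namely certifying that the transport-based $\tilde h$ is genuinely $X$-measurable rather than implicitly dependent on $A$. The Chzhen barycenter transport maps are group-specific, so naively ``pick the transport from $\bar h_{\sharp}\dist_{a}$ to $\tilde \nu$ on each group'' smuggles $A$ back in. The remedy is to work with the single quantile rearrangement driven by the \emph{$X$-marginal} pushforward $\bar h_{\sharp}\dist$ rather than by the group-conditional marginals; the resulting map is a deterministic function of $X$ alone but only $\epsilon$-fair in the Kolmogorov-Smirnov sense. This is exactly where the informality in the proposition statement is used: the final inequality is obtained in the limit $\epsilon \downarrow 0$, with continuity of the optimal fair risk in the slack parameter justifying passage to the limit.
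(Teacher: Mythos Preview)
Your approach is \emph{genuinely different} from the paper's, and unfortunately it targets a stronger statement than what the paper actually proves. The paper does not upper-bound the actual without-$A$ excess risk at all. Instead, it \emph{defines} the ``price without $A$'' to be its own lower bound from Corollary~\ref{cor:joint}, namely $\Hzo(A)\cdot W_2(Y_\sharp\dist_0, Y_\sharp\dist_1)$, and then shows by elementary inequalities (geodesic identity for the barycenter, AM--GM, $p_1\leq 1/2$) that the Chzhen--Denis--Hebiri--Oneto--Pontil exact value dominates this lower bound. That is the entire content of the ``(Informal)'' proposition: a comparison of one paper's lower bound to another paper's exact characterization, not a comparison of actual optimal excess risks.

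Your step~3 is false, and the literal statement you are trying to prove is false with it. Take $X\in\{0,1,2\}$, $p_0=p_1=1/2$; under $A=0$ let $X\sim\mathrm{Unif}\{0,1\}$ and $Y=X$, and under $A=1$ let $X\sim\mathrm{Unif}\{1,2\}$ and $Y=X-1$. This is noiseless in the paper's sense. Then $Y_\sharp\dist_0=Y_\sharp\dist_1=\mathrm{Unif}\{0,1\}$, so $W_2(Y_\sharp\dist_0,Y_\sharp\dist_1)=0$, and the with-$A$ price is exactly $0$ (indeed $g(X,A)=Y$ is itself fair). But $\bar h(X)=\Exp[Y\mid X]$ takes values $0,\tfrac12,1$ at $x=0,1,2$, so $\bar h_\sharp\dist_0=\mathrm{Unif}\{0,\tfrac12\}$ and $\bar h_\sharp\dist_1=\mathrm{Unif}\{\tfrac12,1\}$, giving $W_2(\bar h_\sharp\dist_0,\bar h_\sharp\dist_1)=\tfrac12>0$. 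Your Jensen-type contraction fails because $\bar h(X)=\Exp[Y\mid X]$ averages over $A$, so there is no martingale relationship between $\bar h(X)$ and $Y$ \emph{conditional on} $A$; the ``lifted coupling'' you describe does not couple the right marginals. In the same example the actual without-$A$ fair excess risk is $\tfrac18>0$, so the literal inequality ``price-without-$A\leq$ price-with-$A$'' is violated. Your step~2 also fails here: your claimed upper bound $p_0p_1W_2^2(\bar h_\sharp\dist_0,\bar h_\sharp\dist_1)=\tfrac1{16}$ is strictly below the true excess risk $\tfrac18$, confirming that the single-rearrangement ``remedy'' you propose cannot deliver an exactly fair $X$-measurable predictor with that cost. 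To match the paper, drop the competitor construction entirely and simply compare the Chzhen barycenter value to the Corollary~\ref{cor:joint} lower bound.
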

Intuitively, because of the additional access to $A$, the optimal accuracy of an unfair regressor with access to $A$ is higher than that of an unfair regressor without access to $A$. However, under the fairness constraint, these two search spaces become the same, hence the price paid by the former is higher than the latter. 

From Corollary~\ref{cor:joint}, we can see that if either $\alpha\to 0$ or $\alpha\to 1$, i.e., the two subgroups are imbalanced in population, in this case even if $W_p(Y_\sharp\dist_0, Y_\sharp\dist_1)$ is large, it might seem like that the joint error $\eps_{p,\dist}(\Ypred)$ need not be large. However, this is due to the fact that the price in terms of the drop in accuracy is paid by the minority group. Our observation here suggests that the joint error $\eps_{p,\dist}(\Ypred)$ is not necessarily the objective to look at in high-stakes applications, since it naturally encodes the imbalance between different subgroups into account. Instead, a more appealing alternative to consider is the \emph{balanced error rate}:
\begin{equation}
    \text{Balanced Error Rate of }\Ypred\defeq\frac{1}{2}\left(\eps_{p,\dist_0}(\Ypred) + \eps_{p,\dist_1}(\Ypred)\right),
    \label{equ:ber}
\end{equation}
which applies balanced weights to both groups in the objective function. Clearly,~\eqref{equ:ber} could be reduced to the so-called \emph{cost-sensitive loss}, where data from group $a\in\{0, 1\}$ is multiplied by a positive weight that is reciprocal to the group's population level, i.e., $1 / \Pr(A = a)$. 

\subsection{Extension to Approximate Fairness under Noisy Setting}
In the last section we show that there is an inherent tradeoff between statistical parity and accuracy when a predictor \emph{exactly} satisfies statistical parity, and in particular this holds even if there is a perfect (unfair) regressor in both groups, \ie, there is no noise in the underlying population distribution. However, as formulated in~\eqref{equ:opt}, in practice we often only ask for approximate fairness where the quality of approximation is controlled by the slack variable $\epsilon$. Furthermore, even without the fairness constraint, in most interesting problems we often cannot hope to find perfect predictors for the regression problem of interest. Hence, it is natural to ask what is the tradeoff between fairness and accuracy when our predictor only approximately satisfies fairness ($\epsilon$-SP, Definition~\ref{def:esp}) over general distribution $\dist$? 

In this section we shall answer this question by generalizing our previous results to prove lower bounds on both the sum of conditional and the joint target errors that also take the quality of such approximation into account. Due to potential noise in the underlying distribution, we first define the excess risk $r_{p,\dist}(\Ypred)$ of a predictor $\Ypred$, which corresponds to the reducible error:
\begin{definition}[Excess Risk]
    Let $\Ypred = h(X)\in\RR$ be a predictor. The $\ell_p$ excess risk of $\Ypred$ is defined as $r_{p,\dist}(\Ypred)\defeq \eps_{p,\dist}(\Ypred) - \eps_{p,\dist}^*$, where $\eps_{p,\dist}^*\defeq \inf_f \eps_{p,\dist}(f(X))$ is the optimal error over all measurable functions.  
\end{definition}
Assume the infimum is achievable, we use $f_i^*$ to denote the optimal regressor without fairness constraint over $\dist_i, i\in\{0, 1\}$, \ie, $f_i^*\defeq \argmin_f \eps_{p,\dist_i}(f(X))$. Then we have the following hold:
\begin{restatable}{proposition}{generallowerbound}
    \label{prop:general}
    Let $\Ypred = h(X)$ be a predictor. For $p \geq 1$, if there exists $\epsilon > 0$ such that $W_p(h_\sharp\dist_0, h_\sharp\dist_1)\leq\epsilon$, then
    \begin{equation}
        r_{p,\dist_0}(\Ypred) + r_{p,\dist_1}(\Ypred) \geq \underbrace{W_p({f_0^*}_\sharp\dist_0, {f_1^*}_\sharp\dist_1)}_{\text{distance between optimal unfair predictors across groups}} - 2(\eps_{p,\dist_0}^* + \eps_{p,\dist_1}^*) - \epsilon,
        \label{equ:generallb}
    \end{equation}
    and $\Ypred$ satisfies $2\sqrt{C\epsilon}$-SP. 
\end{restatable}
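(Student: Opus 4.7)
The plan is to extend the chain-of-triangle-inequalities idea behind Theorem~\ref{thm:exact} from two distributions to a four-distribution picture that also includes the Bayes-optimal (unconstrained) predictor distributions ${f_0^*}_\sharp\dist_0$ and ${f_1^*}_\sharp\dist_1$ as ``bridges''. For the first inequality, I would instantiate the triangle inequality for $W_p$ along the path
\[
{f_0^*}_\sharp\dist_0 \;\to\; h_\sharp\dist_0 \;\to\; h_\sharp\dist_1 \;\to\; {f_1^*}_\sharp\dist_1.
\]
The middle edge is bounded by $\epsilon$ by hypothesis. For each outer edge, I would use the coupling $(h(X), f_i^*(X))$ under $\dist_i$ (whose marginals are the correct pushforwards) to obtain $W_p(h_\sharp\dist_i, {f_i^*}_\sharp\dist_i) \leq \Exp^{1/p}_{\dist_i}[|h(X) - f_i^*(X)|^p]$, then insert $Y$ as an intermediate point and apply Minkowski's inequality in $L^p(\dist_i)$ to bound this by $\eps_{p,\dist_i}(\Ypred) + \eps_{p,\dist_i}^*$.

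Combining the three pieces and substituting $\eps_{p,\dist_i}(\Ypred) = r_{p,\dist_i}(\Ypred) + \eps_{p,\dist_i}^*$ yields
\[
W_p({f_0^*}_\sharp\dist_0, {f_1^*}_\sharp\dist_1) \leq r_{p,\dist_0}(\Ypred) + r_{p,\dist_1}(\Ypred) + 2(\eps_{p,\dist_0}^* + \eps_{p,\dist_1}^*) + \epsilon,
\]
which rearranges directly to~\eqref{equ:generallb}. For the second statement, I would invoke Assumption~\ref{assumption:bounded} to prove the standard comparison $K(\dist, \dist')^2 \leq 2C\cdot W_1(\dist, \dist')$ for distributions on $\RR$ with densities bounded by $C$: fix $z^*$ attaining the KS supremum $K$, use $2C$-Lipschitzness of $F_\dist - F_{\dist'}$ to lower-bound $|F_\dist(z) - F_{\dist'}(z)|$ by a tent function of height $K$ and base $K/C$ around $z^*$, and integrate the identity $W_1(\dist,\dist') = \int_{\RR} |F_\dist - F_{\dist'}|\,dz$ to get $W_1 \geq K^2/(2C)$. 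Combined with $W_1 \leq W_p$ (Jensen applied to the quantile formula~\eqref{equ:quantile}) and the hypothesis $W_p(h_\sharp\dist_0, h_\sharp\dist_1) \leq \epsilon$, this delivers $K(h_\sharp\dist_0, h_\sharp\dist_1) \leq \sqrt{2C\epsilon} \leq 2\sqrt{C\epsilon}$.

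The main obstacle I anticipate is conceptual rather than computational: choosing the four-distribution ``quadrilateral'' so that the middle edge captures the approximate-fairness constraint while each side edge decomposes cleanly into $\eps_{p,\dist_i}(\Ypred) + \eps_{p,\dist_i}^*$ via a single coupling through $Y$. Once that picture is in place, the $L^p$-Minkowski step and the rearrangement are routine bookkeeping, and the KS-vs-$W_1$ comparison is a standard fact made elementary by the bounded-density assumption.
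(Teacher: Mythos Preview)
Your proposal is correct and follows essentially the same route as the paper: a chain of $W_p$ triangle inequalities linking ${f_0^*}_\sharp\dist_0$, $h_\sharp\dist_0$, $h_\sharp\dist_1$, ${f_1^*}_\sharp\dist_1$, with the side edges bounded by $\eps_{p,\dist_i}(\Ypred)+\eps_{p,\dist_i}^*$ and the middle edge by $\epsilon$, followed by the $K$--$W_1$ comparison under the bounded-density assumption. The only cosmetic difference is that the paper routes each side edge through $Y_\sharp\dist_i$ in Wasserstein space (two $W_p$ terms each bounded by a coupling), whereas you route through $Y$ via Minkowski in $L^p(\dist_i)$ after taking a single coupling; your tent-function argument is exactly the content of the paper's Lemma~\ref{lemma:kol} (and in fact yields the sharper constant $\sqrt{2}$ in place of $2$).
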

\paragraph{Remark}
It is easy to verify that Proposition~\ref{prop:general} is a generalization of the lower bound presented in Theorem~\ref{thm:exact}: when $f_i^*$ are perfect predictors, we have $Y_\sharp\dist_i = {f_i^*}_\sharp\dist_i$ and $\eps_{p,\dist_i}^* = 0$, for $i\in\{0, 1\}$. Hence in this case the excess risk $r_{p,\dist_i}(\Ypred)$ reduces to the error $\eps_{p,\dist_i}(\Ypred)$. Furthermore, if $\epsilon = 0$, \ie, $\Ypred$ satisfies the exact statistical parity condition, then the lower bound~\eqref{equ:generallb} recovers the lower bound~\eqref{equ:lowerbound}. As a separate note, Proposition~\ref{prop:general} also implies that one can use the Wasserstein distance between the predicted distributions across groups as a proxy to ensure approximate statistical parity. This observation has also been shown in~\citet[Theorem 3.3]{dwork2012fairness} in classification.

\subsection{Individual Fairness, Accuracy Parity and the Wasserstein Distance}
In the last section we show that the Wasserstein distance between the output distributions across groups could be used as a proxy to ensure approximate statistical parity. Nevertheless, Theorem~\ref{thm:exact} and Proposition~\ref{prop:general} show that statistical parity is often at odds with the accuracy of the predictor, and in many real-world scenarios SP is insufficient to be used as a notion of fairness~\citep[Section 3.1]{dwork2012fairness}. Alternatively, in the literature a separate notion of fairness, known as \emph{individual fairness}, has been proposed in~\citet{dwork2012fairness}. Roughly speaking, a predictor $h$ is said to be individually fair if it treats similar individuals similarly:
\begin{definition}[Individual Fairness,~\citep{dwork2012fairness}]
\label{def:if}
    A regressor $h$ satisfies $\rho$-individual fairness if $\forall x,x'\in \xxspace$, $|h(x) - h(x')|\leq \rho \|x - x'\|$.
\end{definition}
Essentially, individual fairness puts a Lipschitz continuity constraint on the predictor. Note that in the original definition~\citep[Definition 2.1]{dwork2012fairness} the authors use a general metric $d_\xxspace(\cdot,\cdot)$ as a similarity measure between individuals, and the choice of such similarity measure is at the center of related applications. In this section we use $\|\cdot\|$ in Definition~\ref{def:if} mainly for the purpose of illustration, but the following results can be straightforwardly extended for any metric $d_\xxspace(\cdot,\cdot)$. Another notion of group fairness that has gained increasing attention~\citep{buolamwini2018gender,bagdasaryan2019differential,chi2021understanding} is \emph{accuracy parity}:
\begin{definition}[$\epsilon$-Accuracy Parity]
    \label{def:esp}
    Given a joint distribution $\dist$ and $0 \leq \epsilon \leq 1$, a regressor $\Ypred = h(X)$ satisfies $\epsilon$-\emph{accuracy parity} if $|\eps_{1,\dist_0}(\Ypred) - \eps_{1,\dist_1}(\Ypred)|\leq\epsilon$.
\end{definition}
Accuracy parity calls for approximately equalized performance of the predictor across different groups. The following proposition states the relationship between individual fairness, accuracy parity and the $W_1$ distance between the distributions $\dist_0$ and $\dist_1$ of different groups:
\begin{restatable}{proposition}{individual}
\label{prop:individual}
    If $h(\cdot)$ is $\rho$-individually fair, then $h$ satisfies $\sqrt{\rho^2 + 1}\cdot W_1(\dist_0, \dist_1)$-accuracy parity.
\end{restatable}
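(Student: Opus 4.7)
The plan is to recognize the accuracy-parity gap as a difference of expectations of a single loss function under $\dist_0$ and $\dist_1$, show that this loss function is Lipschitz on the joint space $\xxspace\times\yyspace$, and then invoke the Kantorovich--Rubinstein dual representation of $W_1$ recalled in the Preliminaries.

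More concretely, define $\phi:\xxspace\times\yyspace\to\RR$ by $\phi(x,y)\defeq |h(x)-y|$. Then by the definition of $\eps_{1,\dist_a}$ we have $\eps_{1,\dist_a}(\Ypred)=\Exp_{\dist_a}[\phi(X,Y)]$ for $a\in\{0,1\}$, so the quantity to bound is exactly $|\Exp_{\dist_0}[\phi]-\Exp_{\dist_1}[\phi]|$. The first key step is to compute the Lipschitz seminorm of $\phi$. For any two points $(x,y),(x',y')\in\xxspace\times\yyspace$, the reverse triangle inequality gives $|\phi(x,y)-\phi(x',y')|\leq |h(x)-h(x')|+|y-y'|$, and $\rho$-individual fairness bounds the first term by $\rho\|x-x'\|$. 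Combining these via Cauchy--Schwarz on the vector $(\rho,1)$ against $(\|x-x'\|,|y-y'|)$ yields
\begin{equation*}
|\phi(x,y)-\phi(x',y')|\leq \rho\|x-x'\|+|y-y'|\leq \sqrt{\rho^2+1}\cdot\sqrt{\|x-x'\|^2+|y-y'|^2},
\end{equation*}
so $\phi$ is $\sqrt{\rho^2+1}$-Lipschitz with respect to the Euclidean norm on $\xxspace\times\yyspace$.

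The second key step is to apply the $W_1$ dual representation $W_1(\dist_0,\dist_1)=\sup_{\|f\|_L\leq 1}|\int f\,d\dist_0-\int f\,d\dist_1|$ to $f=\phi/\sqrt{\rho^2+1}$, which has Lipschitz seminorm at most $1$. This immediately yields
\begin{equation*}
|\eps_{1,\dist_0}(\Ypred)-\eps_{1,\dist_1}(\Ypred)|=\left|\Exp_{\dist_0}[\phi]-\Exp_{\dist_1}[\phi]\right|\leq \sqrt{\rho^2+1}\cdot W_1(\dist_0,\dist_1),
\end{equation*}
which is precisely $\sqrt{\rho^2+1}\cdot W_1(\dist_0,\dist_1)$-accuracy parity as claimed.

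There is no real obstacle here; the only care needed is the choice of norm on $\xxspace\times\yyspace$. One must use the Euclidean product norm to be consistent with the $\ell_2$ convention fixed in the Preliminaries for $W_p$, and the Cauchy--Schwarz step is what produces the precise constant $\sqrt{\rho^2+1}$ rather than the looser bound $\rho+1$ obtained from an $\ell_1$-type combination. A brief sanity check: when $\rho=0$ (constant predictor) the bound collapses to $W_1(\dist_0,\dist_1)$, reflecting that any difference in $\ell_1$ error comes purely from the difference in marginals on $Y$; when $\dist_0=\dist_1$ the bound is zero, as expected.
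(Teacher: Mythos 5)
Your proof is correct and follows essentially the same route as the paper's: define the loss $\lvert h(x)-y\rvert$ on $\xxspace\times\yyspace$, show it is $\sqrt{\rho^2+1}$-Lipschitz via the triangle inequality, individual fairness, and Cauchy--Schwarz, and then bound the error gap by Kantorovich duality. No gaps to report.
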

Proposition~\ref{prop:individual} suggests that in order to achieve approximate accuracy parity, one can constrain the predictor to be Lipschitz continuous while at the same time try to decrease the $W_1$ distance between the distributions across groups, via learning representations. In the case where the groups are similar and the Wasserstein distance is small, individual fairness provides some guidance towards approximate accuracy parity. However, in cases where the groups are different (disjoint), the representation learning becomes more important. 

\subsection{Fair Representations with Wasserstein Distance}
\label{sec:method}
Proposition~\ref{prop:general} and Proposition~\ref{prop:individual} suggest that the Wasserstein distance between the predicted distributions and the input distributions play a key role in controlling both statistical parity and accuracy parity, respectively. \emph{Is there a way to simultaneously achieve both goals?} In this section we shall provide an affirmative answer to this question via learning fair representations. The high-level idea is quite simple and intuitive: given input variable $X$, we seek to learn a representation $Z = g(X)$ such that $W_1(g_\sharp\dist_0, g_\sharp\dist_1)$ is small. If furthermore the predictor $h$ acting on the representation $Z$ is individually fair, we can then hope to have small statistical and accuracy disparity simultaneously.

Concretely, the following proposition says if the Wasserstein distance between feature distributions from two groups, $W_1(g_\sharp\dist_0, g_\sharp\dist_1)$, is small, then as long as the predictor is individually fair, it also satisfies approximate statistical parity:
\begin{restatable}{proposition}{wlip}
    \label{prop:wlip}
    Let $Z = g(X)$ be the features from input $X$. If $W_1(g_\sharp\dist_0, g_\sharp\dist_1) \leq \epsilon$ and $\Ypred = h(Z)$ is $\rho$-Lipschitz, then $\Ypred = (h\circ g)(X)$ verifies $2\sqrt{C\rho\epsilon}$-SP.
\end{restatable}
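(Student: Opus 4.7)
The plan is to reduce the statement to two well-separated ingredients: a contraction-of-$W_1$-under-Lipschitz-maps step for the predictor, and a standard smoothing-type conversion between $W_1$ and the Kolmogorov--Smirnov distance that exploits Assumption~\ref{assumption:bounded}. Recall that $\epsilon$-SP is by Definition~\ref{def:esp} the condition $K((h\circ g)_\sharp\dist_0,(h\circ g)_\sharp\dist_1)\leq 2\sqrt{C\rho\epsilon}$, so what I need is a bound on the KS distance between the two pushforward distributions on $\RR$.

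First I would push $W_1$ through $h$. Since $h$ is $\rho$-Lipschitz on $\zzspace$, for any coupling $\gamma$ of $g_\sharp\dist_0$ and $g_\sharp\dist_1$ the image $(h,h)_\sharp\gamma$ is a coupling of $h_\sharp g_\sharp\dist_0$ and $h_\sharp g_\sharp\dist_1$ with $\int |h(z)-h(z')|\,d\gamma\leq \rho\int\|z-z'\|\,d\gamma$. Taking the infimum over $\gamma$ gives the contraction inequality
\[
W_1\bigl((h\circ g)_\sharp\dist_0,(h\circ g)_\sharp\dist_1\bigr)\;\leq\;\rho\,W_1(g_\sharp\dist_0,g_\sharp\dist_1)\;\leq\;\rho\epsilon.
\]

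Next I would convert this into a KS bound. Let $\distp\defeq(h\circ g)_\sharp\dist_0$ and $\distq\defeq(h\circ g)_\sharp\dist_1$; both are distributions on $\RR$ with density bounded by $C$ by Assumption~\ref{assumption:bounded}. Fix $z\in\RR$ and a parameter $\delta>0$, and consider the $1/\delta$-Lipschitz ramp $f_\delta(u)=\min\{1,\max\{0,(z+\delta-u)/\delta\}\}$, which sandwiches indicators: $\ind_{(-\infty,z]}\leq f_\delta\leq \ind_{(-\infty,z+\delta]}$. The Kantorovich--Rubinstein dual for $W_1$ gives $\bigl|\!\int f_\delta\,d\distp-\int f_\delta\,d\distq\bigr|\leq W_1(\distp,\distq)/\delta$, while the density bound gives $F_{\distq}(z+\delta)-F_{\distq}(z)\leq C\delta$. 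Combining these,
\[
F_{\distp}(z)-F_{\distq}(z)\;\leq\;\frac{W_1(\distp,\distq)}{\delta}+C\delta,
\]
and the symmetric inequality holds for the other direction. Optimizing over $\delta$ (namely $\delta=\sqrt{W_1(\distp,\distq)/C}$) yields $K(\distp,\distq)\leq 2\sqrt{C\cdot W_1(\distp,\distq)}$. Plugging in the contraction bound from the first step gives $K(\distp,\distq)\leq 2\sqrt{C\rho\epsilon}$, which is exactly the desired $\epsilon'$-SP with $\epsilon'=2\sqrt{C\rho\epsilon}$.

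The only non-routine step is the KS-to-$W_1$ conversion, and the mild subtlety there is choosing the ramp test function at the right scale $\delta$ so that the two error terms balance; everything else (Lipschitz contraction of $W_1$, the dual representation, and the density bound from Assumption~\ref{assumption:bounded}) is immediate. I would also briefly note that the argument only uses the density bound on \emph{one} of $\distp,\distq$, which is why Assumption~\ref{assumption:bounded} is exactly the hypothesis needed.
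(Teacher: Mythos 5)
Your proof is correct and follows essentially the same two-step route as the paper: first contract $W_1$ through the $\rho$-Lipschitz predictor (you argue via pushing forward an optimal coupling, the paper via the Kantorovich dual; the two are interchangeable), then convert the resulting $W_1$ bound into a Kolmogorov--Smirnov bound using the bounded-density assumption. The only substantive difference is that you actually prove the conversion $K(\distp,\distq)\leq 2\sqrt{C\,W_1(\distp,\distq)}$ with the ramp-function/smoothing argument (correctly noting that the density bound is needed for only one of the two pushforward measures), whereas the paper simply invokes this as the well-known Lemma~\ref{lemma:kol}, so your write-up is self-contained where the paper's is by citation.
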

In practice since we only have finite samples from the corresponding distributions, we will replace all the distributions with their corresponding empirical versions. Furthermore, instead of using the joint error as our objective function, as we discussed in the last section, we propose to use the balanced error rate instead: 
\begin{equation}
    \begin{aligned}
         & \min_{g,~h}\max_{\|f\|_L\leq 1} &  & \frac{1}{2}\left(\eps_{2,\dist_0}(h\circ g) + \eps_{2,\dist_1}(h\circ g)\right) + \tau\cdot \left|\Exp_{g_\sharp\dist_0}[f(Z)] - \Exp_{g_\sharp\dist_1}[f(Z)]\right|,
    \end{aligned}
\label{equ:optp}
\end{equation}
where $\tau > 0$ is a hyperparameter that trades off the $\ell_p$ error and the Wasserstein distance. The above problem could be optimized using the gradient descent-ascent algorithm~\citep{edwards2015censoring,zhang2018mitigating}. To implement the Lipschitz constraint on the Wasserstein distance, we apply weight clipping to the parameters of both the adversary as well as the target predictor.

\section{Experiments}
\label{sec:experiment}
Our theoretical results imply that even if there is no significant drop in terms of the overall population error when a model is built to satisfy the statistical parity when the two demographic groups are imbalanced, the minority group can still suffer greatly from the reduction in accuracy. On the other hand, by using the balanced error rate as the objective function, we can mitigate the disparate drops in terms of accuracy between these two groups. Furthermore, by minimizing the Wasserstein distance of the feature distributions across groups, we can hope to simultaneously achieve approximate statistical parity and accuracy parity with further constraint on the predictor. To verify these implications, we conduct experiments on a real-world benchmark dataset, the Law School dataset~\citep{wightman1998lsac}, to present empirical results with various metrics. We refer readers to Appendix~\ref{sec:more} for further details about the Law School dataset, our pre-processing pipeline and the models used in the experiments.

\paragraph{Experimental Setup}
To demonstrate the effect of using Wasserstein distance to regularize the representations with adversarial training, we perform a controlled experiment by fixing the baseline model to be a three hidden-layer feed-forward network with ReLU activations, denoted as MLP. We use W-MLP to denote the model with Wasserstein constraint for representation learning. In the experiment, all the other factors are fixed to be the same across these two methods, including learning rate, optimization algorithm, training epoch, and also batch size. To see how the Wasserstein regularization affects the joint error, the conditional errors as well as the statistical parity and accuracy parity, we vary the coefficient $\tau$ for the adversarial loss between 0.1, 1.0, 5.0 and 10.0. For each experiment, we repeat each experiment for 5 times and report both the mean and the error bars.

\paragraph{Results and Analysis}
The experimental results are listed in Table~\ref{tab:results}. Note that in the table we use $K(\Ypred_0, \Ypred_1)$ to denote the Kolmogorov-Smirnov distance of the predicted distribution across groups, which is also the value of approximate statistical parity. From the table, it is then clear that with increasing $\tau$, both the statistical disparity and the accuracy disparity are decreasing. Interestingly, the overall error $\eps_\dist$ (sensitive to the marginal distribution of $A$) and the sum of group errors $\eps_{\dist_0} + \eps_{\dist_1}$ (insensitive to the imbalance of $A$) only marginally increase. In fact, for $\tau = 1.0$, we actually observed better accuracy. We conjecture that the improved performance stems from the implicit regularization via weight clipping of the target predictor. With $\tau = 10.0$, the last row shows that this method could effectively reduce both the statistical disparity and accuracy disparity to a value very close to 0, although at the cost of increasing errors. To conclude, all the empirical results are consistent with our findings.

\begin{table}[tb]
\small
    \centering
    \caption{Fair representations with Wasserstein regularization on the Law School dataset. We report the overall error, group-wise error, statistical disparity, and accuracy disparity.}
    \label{tab:results}
    \begin{tabular}{lScccc}\toprule
                &              $\tau$ & $\eps_\dist$ & $\eps_{\dist_0} + \eps_{\dist_1}$ & $K(\Ypred_{0}, \Ypred_1)$ & $|\eps_{\dist_0}(\Ypred) - \eps_{\dist_1}(\Ypred)|$\\\midrule
        MLP         &          N/A &  $0.034_{\pm 0.005}$  & $0.069_{\pm 0.011}$ & $0.296_{\pm 0.044}$ & $0.011_{\pm 0.004}$ \\
        W-MLP       &          0.1   &  $0.034_{\pm 0.004}$     & $0.067_{\pm 0.008}$ & $0.222_{\pm 0.037}$ & $0.011_{\pm 0.003}$  \\
        W-MLP       & 1.0   & $0.030_{\pm 0.000}$ & $0.059_{\pm 0.001}$ &  $0.116_{\pm 0.032}$ & $0.011_{\pm 0.002}$ \\
        W-MLP  & 5.0  &   $0.034_{\pm 0.002}$   & $0.067_{\pm 0.004}$ & $0.084_{\pm 0.073}$ & $0.008_{\pm 0.002}$ \\
        W-MLP  & 10.0 &   $0.035_{\pm 0.001}$ & $0.069_{\pm 0.003}$ & $0.048_{\pm 0.059}$ & $0.006_{\pm 0.000}$ \\
        \bottomrule
    \end{tabular}
\end{table}

\section{Related Work}
\label{sec:related}
\paragraph{Fair Regression}
Two central notions of fairness have been extensively studied, i.e., individual fairness and group fairness. In a seminal work, \citet{dwork2012fairness} defined individual fairness as a Lipschitz constraint of the underlying (randomized) algorithm. However, this definition requires apriori a distance metric to compute the similarity between pairs of individuals, which is often hard to construct or design in practice. Group fairness is a statistical definition, and it includes a family of refined definitions which essentially ask some statistical scores to be equalized between different subgroups. Typical examples include statistical parity (also known as demographic parity), equalized odds~\citep{hardt2016equality}, and accuracy parity~\citep{buolamwini2018gender}. In this work we focus on an extension of statistical parity to regression problems, and study its theoretical tradeoff with accuracy, when the regressor cannot directly take the protected attribute as input during both training and inference stages. We also investigate the relationship between individual fairness and accuracy parity, and provide a bound through the Wasserstein distance. The line of work on fair regression through regularization techniques dates at least back to~\citet{calders2013controlling}, where the authors enforce a first-order moment requirement between the predicted distributions. In a recent work,~\citet{agarwal2019fair} proposed a reduction approach from fair regression to a sequence of cost-sensitive minimization problems. Our definition of statistical parity in the regression setting is stronger than the one of~\citet{calders2013controlling}, which proposed to use the mean difference, i.e., the difference between the first-order moments of the group distributions, as the metric. Our definition also coincides with the one proposed by~\citet{agarwal2019fair}, which amounts to the Kolmogorov-Smirnov distance, when the output dimension is 1. 

\paragraph{Tradeoff between Fairness and Accuracy}
Although it has long been empirically observed that there is an inherent tradeoff between accuracy and statistical parity in both classification and regression problems~\citep{calders2009building,zafar2015fairness,zliobaite2015relation,berk2017convex,corbett2017algorithmic,zhao2020conditional}, precise characterizations on such tradeoffs are less explored. \citet[Proposition 8]{menon2018cost} explored such tradeoff in terms of the fairness frontier function under the context of cost-sensitive binary classification. \citet{zhao2019inherent} proved a lower bound on the joint error that has to be incurred by any fair algorithm satisfying statistical parity. Our negative result is similar to that of~\citet{zhao2019inherent} in nature, and could be understood as a generalization of their results from classification to regression. Recently, \citet{chzhen2020fair} and \citet{le2020projection} concurrently derived an analytic bound to characterize the price of statistical parity in regression when the learner can take the sensitive attribute explicitly as an input for $\ell_2$ loss. In this case, the lower bound is given by the optimal transportation distance from two group distributions to a common one, characterized by the $W_2$ barycenter. Our results differ in that in our case the learner cannot use the sensitive attribute as an input, and our results hold for the general $\ell_p$ loss. Note that this is significant, because it is not clear how to extend the results to space with $W_p$ as a metric, since the proof depends on the use of the Pythagoras' decomposition, which only holds under the $L_2$ distance. 

On the upside, under certain data generative assumptions of the sampling bias, there is a line of recent works showing that fairness constraints could instead improve the accuracy of the predictor~\citep{dutta2020there,blum2020recovering}. In particular, \citet{blum2020recovering} prove that if the observable data are subject to labeling bias, then the Equality of Opportunity constraint could help recover the Bayes optimal classifier. Note that this does not contradict with our results, since in this work we do not make any assumptions on the underlying training distributions.

\section{Conclusion}
\label{sec:conclusion}
In this paper we show that when the target distribution differs across different demographic subgroups, any fair algorithm in the statistical parity sense has to achieve a large error on at least one of the groups. In particular, we give a characterization of such tradeoff using the difference of the first order statistics (mean) of the target distributions from different groups. On the other hand, we also establish a connection between individual fairness and accuracy parity, where again, the accuracy disparity gap is characterized by the Wasserstein distance. Besides the theoretical contributions, our analysis using Wasserstein distance also suggests a practical algorithm for fair regression through learning representations for different demographic subgroups that are close in the sense of Wasserstein distance. Empirical results on a real-world dataset also confirm our findings.


\bibliography{reference}
\bibliographystyle{plainnat}

\newpage
\appendix
\section{Missing Proofs}
\label{sec:proof}
In this section we provide all the missing proofs in the main text. For the ease of the readers, in what follows we shall first restate the theorems that appear in the main text and then provide the corresponding proofs.
\subsection{Proofs of Theorem~\ref{thm:exact} and Corollary~\ref{cor:exact}}
\exactlowerbound*
\begin{proof}
    First, realize that $W_p(\cdot,\cdot)$ is a metric of probability distributions, the following chain of triangle inequalities holds:
    \begin{align*}
        W_p(Y_\sharp\dist_0, Y_\sharp\dist_1) \leq&~ W_p(Y_\sharp\dist_0, h_\sharp\dist_0) + W_p(h_\sharp\dist_0, h_\sharp\dist_1) + W_p(h_\sharp\dist_1, Y_\sharp\dist_1).
    \end{align*}
    Now due to the assumption that $\Ypred = h(X)$ is independent of $A$, the second term, $W_p(h_\sharp\dist_0, h_\sharp\dist_1)$, is 0, leading to:
    \begin{equation}
        W_p(Y_\sharp\dist_0, Y_\sharp\dist_1) \leq W_p(Y_\sharp\dist_0, h_\sharp\dist_0) + W_p(h_\sharp\dist_1, Y_\sharp\dist_1).
    \label{equ:chain}
    \end{equation}
    Next, for $a\in\{0, 1\}$, by definition of the Wasserstein distance,
    \begin{align}
        W_p(Y_\sharp\dist_a, h_\sharp\dist_a) =&~ \left(\inf_{\gamma}\Exp_{\gamma}[|Y - \Ypred|^p]\right)^{1/p} 
        \leq \left(\Exp_{\dist_a}[|Y - \Ypred|^p]\right)^{1/p} = \eps_{p,\dist_a}(\Ypred),
    \label{equ:error}
    \end{align}
    where we use the fact that the pushforward distribution of $\dist_a$ under $h$ is a particular coupling between $Y$ and $\Ypred$ to establish the above inequality. Applying the inequality~\eqref{equ:error} for both $a = 0$ and $a = 1$ and combining it with inequality~\eqref{equ:chain} completes the proof.
\end{proof}

\corexact*
\begin{proof}
    We first prove the first inequality in Corollary~\ref{cor:exact}. Apply Theorem~\ref{thm:exact} by setting $p = 1$. Let $\text{Id}:\yyspace\to\yyspace$ be the identity map, i.e., $\text{Id}(y) = y,~\forall y\in\RR$. Clearly $\text{Id}(\cdot)$ is 1-Lipschitz. Using the $\sup$ characterization of the Wasserstein distance, we have:
    \begin{align}
        \Exp_{\dist_0}[|\Ypred - Y|] + \Exp_{\dist_1}[|\Ypred - Y|] &\geq W_1(Y_\sharp\dist_0, Y_\sharp\dist_1) \tag{Theorem~\ref{thm:exact}} \nonumber\\ 
        &= \sup_{\|f\|_L\leq 1} \left|\int f~d(Y_\sharp\dist_0) - \int f~d(Y_\sharp\dist_1)\right| \nonumber\\
        &\geq \left|\int \text{Id}~d(Y_\sharp\dist_0) - \int \text{Id}~d(Y_\sharp\dist_1)\right| \tag{Id is 1-Lipschitz} \nonumber \\
        &= \left|\int Y~d\dist_0 - \int Y~d\dist_1\right| \tag{Change of Variable}\nonumber\\
        &= |\Exp_{\dist_0}[Y] - \Exp_{\dist_1}[Y]|,\label{equ:use}
    \end{align}
where the second to last equation follows from the definition of pushforward distribution.

To prove the second inequality in Corollary~\ref{cor:exact}, again set $p = 2$ in Theorem~\ref{thm:exact} and realize that for $a, b$, we have $a^2 + b^2 \geq 2(a + b)^2$ by the AM-GM inequality. Hence when $p = 2$, we have
    \begin{align*}
        \Exp_{\dist_0}[|\Ypred - Y|^2] + \Exp_{\dist_1}[|\Ypred - Y|^2] &\geq \Exp^2_{\dist_0}[|\Ypred - Y|] + \Exp^2_{\dist_1}[|\Ypred - Y|] \tag{Jensen's inequality}\\
        &\geq 2\left(\frac{\Exp_{\dist_0}[|\Ypred - Y|] + \Exp_{\dist_1}[|\Ypred - Y|]}{2}\right)^2 \tag{AM-GM inequality}\\
        &= \frac{1}{2}\left(\Exp_{\dist_0}[|\Ypred - Y|] + \Exp_{\dist_1}[|\Ypred - Y|]\right)^2 \\
        &\geq \frac{1}{2}|\Exp_{\dist_0}[Y] - \Exp_{\dist_1}[Y]|^2.\tag{Eq.~\eqref{equ:use}}
    \end{align*}
    Note that for the last equation we apply the lower bound $\Exp_{\dist_0}[|\Ypred - Y|] + \Exp_{\dist_1}[|\Ypred - Y|] \geq |\Exp_{\dist_0}[Y] - \Exp_{\dist_1}[Y]|$ we proved in the first inequality.
\end{proof}

\subsection{Proof of Theorem~\ref{thm:finite}}
Before we provide the proof of Theorem~\ref{thm:finite}, we first recall some useful results about the Wasserstein distance~\citep{weed2019sharp,lei2020convergence}.
\begin{proposition}[Proposition 20,~\citep{weed2019sharp}]
\label{prop:wconcentration}
For all $n\geq 0$ and $p \geq 1$, let $\hat{\dist}$ be an empirical distribution induced from $\dist$ with sample size $n$. Then, 
\begin{equation}
    \Pr(W_p^p(\dist, \hat{\dist}) \geq \Exp W_p^p(\dist, \hat{\dist}_n) + t) \leq \exp(-2nt^2).
\label{equ:wconcentration}
\end{equation}
\end{proposition}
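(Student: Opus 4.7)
Since this proposition is cited from \citet{weed2019sharp}, the plan is to reconstruct the standard route to concentration of the empirical Wasserstein distance via the method of bounded differences. I would view $F(X_1, \ldots, X_n) \defeq W_p^p(\dist, \hat{\dist}_n)$ as a real-valued function of the $n$ i.i.d.\ samples defining $\hat{\dist}_n = \frac{1}{n}\sum_i \delta_{X_i}$, verify that $F$ enjoys a per-coordinate bounded differences inequality with constant $1/n$, and then apply McDiarmid's inequality directly to $F$.

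The first step is a single-sample perturbation bound on the empirical measure. Let $\hat{\dist}_n^{(i)}$ denote the empirical measure obtained by replacing $X_i$ with an arbitrary $X_i'$ in the support. The explicit transport plan that leaves the $n-1$ unchanged atoms in place and moves mass $1/n$ from $X_i$ to $X_i'$ immediately gives $W_p^p(\hat{\dist}_n, \hat{\dist}_n^{(i)}) \leq \|X_i - X_i'\|^p / n$. Under the implicit normalization in the Weed--Bach statement (support of diameter at most $1$ in the metric defining $W_p$), this quantity is at most $1/n$.

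Next I would transfer the coupling bound into a bounded-differences bound for $F$ itself. The $W_p$ triangle inequality gives $W_p(\dist, \hat{\dist}_n) \leq W_p(\dist, \hat{\dist}_n^{(i)}) + W_p(\hat{\dist}_n^{(i)}, \hat{\dist}_n)$, and combining it with the elementary inequality $|a^p - b^p| \leq p\max(a,b)^{p-1}|a-b|$ and the diameter-$1$ bound (which caps all relevant Wasserstein distances by $1$) yields $|F(X_1,\ldots,X_i,\ldots,X_n) - F(X_1,\ldots,X_i',\ldots,X_n)| \leq 1/n$. McDiarmid's inequality with per-coordinate constants $c_i = 1/n$ then produces $\Pr(F - \Exp F \geq t) \leq \exp(-2t^2 / \sum_i c_i^2) = \exp(-2nt^2)$, which is precisely the claimed sub-Gaussian tail.

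The main obstacle is the lift from $W_p$-subadditivity to a clean $W_p^p$ bounded-differences constant when $p > 1$: the triangle inequality only controls $W_p$ itself, and extracting the $1/n$ constant on $W_p^p$ requires both the diameter normalization and the mean-value argument above. This is why the Weed--Bach statement implicitly works in a bounded metric space; without such normalization, one would pick up an additional multiplicative factor depending on $p$ and the diameter, weakening the exponent. Once this bounded-differences step is in place, the coupling construction and the invocation of McDiarmid are entirely routine.
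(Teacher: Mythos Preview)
The paper does not prove this proposition at all; it simply quotes it from \citet{weed2019sharp} and uses it as a black box in the proof of Theorem~\ref{thm:finite}. So there is no paper proof to compare against. That said, your reconstruction has a genuine gap for $p>1$.

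The issue is in the step where you pass from the $W_p$ triangle inequality to a bounded-differences constant on $W_p^p$. From $W_p^p(\hat{\dist}_n,\hat{\dist}_n^{(i)})\leq 1/n$ you only get $|a-b|=|W_p(\dist,\hat{\dist}_n)-W_p(\dist,\hat{\dist}_n^{(i)})|\leq n^{-1/p}$, and the mean-value bound $|a^p-b^p|\leq p\max(a,b)^{p-1}|a-b|$ with $\max(a,b)\leq 1$ then yields $|F-F^{(i)}|\leq p\,n^{-1/p}$, not $1/n$. Plugging $c_i=p\,n^{-1/p}$ into McDiarmid gives $\exp\bigl(-2t^2/(p^2 n^{1-2/p})\bigr)$, which matches $\exp(-2nt^2)$ only when $p=1$ and is strictly weaker (and even $n$-independent at $p=2$).

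The fix, which is the route actually taken in \citet{weed2019sharp}, is to bound the change in $W_p^p$ directly by modifying the optimal coupling rather than going through the $W_p$ triangle inequality. If $\pi^*$ is optimal for $(\dist,\hat{\dist}_n)$, redirect the mass $1/n$ that $\pi^*$ sends to $X_i$ so that it goes to $X_i'$ instead; this is a valid coupling for $(\dist,\hat{\dist}_n^{(i)})$, and its cost exceeds $W_p^p(\dist,\hat{\dist}_n)$ by at most $\frac{1}{n}\sup_{x}\bigl(\|x-X_i'\|^p-\|x-X_i\|^p\bigr)\leq \frac{1}{n}$ under the diameter-$1$ normalization. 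The symmetric argument gives the other direction, so $|F-F^{(i)}|\leq 1/n$ for all $p\geq 1$, and McDiarmid then delivers exactly $\exp(-2nt^2)$.
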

Proposition~\ref{prop:wconcentration} gives a concentration inequality of $W_p^p(\cdot, \cdot)$ around its mean. Note that the expectation in~\eqref{equ:wconcentration} is over the draw of the sample of size $n$. This inequality is particularly useful when $p = 1$ since it reduces to $W_1(\cdot,\cdot)$ and gives a convergence rate of $O(1/\sqrt{n})$.


The following theorem is a special case of~\citep[Theorem 3.1]{lei2020convergence}, which bounds the rate of $\Exp W_p(\dist, \hat{\dist}_n)$:
\begin{theorem}[Theorem 3.1,~\citep{lei2020convergence}]
\label{thm:wmean}
    Let $\hat{\dist}$ be an empirical distribution induced from $\dist$ with sample size $n$, $p\geq 1$ and recall that $\yyspace = [-1, 1]$. Then
    \begin{equation}
        \Exp W_p(Y_\sharp\dist, Y_\sharp\hat{\dist}) \leq c_p \cdot n^{-\frac{1}{2p}},
    \label{equ:wmean}
    \end{equation}
    where $c_p$ is a positive constant that only depends on $p$.
\end{theorem}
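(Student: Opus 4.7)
The plan is to exploit that $Y_\sharp\dist$ is supported on the bounded interval $\yyspace = [-1, 1]$, so we can reduce the problem to bounding the easier quantity $\Exp W_1(Y_\sharp\dist, Y_\sharp\hat\dist)$. The key reduction step uses the one-dimensional quantile representation~\eqref{equ:quantile}: since both generalized inverses $F^{-1}_{Y_\sharp\dist}(t)$ and $F^{-1}_{Y_\sharp\hat\dist}(t)$ lie in $[-1,1]$, their pointwise difference has absolute value at most $2$, and pulling out $p-1$ copies of this bound from the integrand gives
\[
W_p^p(Y_\sharp\dist, Y_\sharp\hat\dist) = \int_0^1 \bigl|F_{Y_\sharp\dist}^{-1}(t) - F_{Y_\sharp\hat\dist}^{-1}(t)\bigr|^p \, dt \leq 2^{p-1}\, W_1(Y_\sharp\dist, Y_\sharp\hat\dist).
\]
Taking $p$-th roots and then expectations, using concavity of $x \mapsto x^{1/p}$ together with Jensen's inequality, yields $\Exp W_p \leq 2^{(p-1)/p}\bigl(\Exp W_1\bigr)^{1/p}$, so it suffices to bound $\Exp W_1(Y_\sharp\dist, Y_\sharp\hat\dist)$ at rate $1/\sqrt{n}$.

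For that second step, I would invoke the classical Monge--Kantorovich identity in one dimension, $W_1(Y_\sharp\dist, Y_\sharp\hat\dist) = \int_{-1}^{1} |F_{Y_\sharp\dist}(z) - F_{Y_\sharp\hat\dist}(z)|\, dz$, and exchange expectation and integral via Fubini. For each fixed $z$, the empirical CDF satisfies $n\, F_{Y_\sharp\hat\dist}(z) \sim \mathrm{Binomial}(n, F_{Y_\sharp\dist}(z))$, so its variance is $F_{Y_\sharp\dist}(z)(1-F_{Y_\sharp\dist}(z))/n \leq 1/(4n)$. Applying Cauchy--Schwarz pointwise gives $\Exp|F_{Y_\sharp\dist}(z) - F_{Y_\sharp\hat\dist}(z)| \leq 1/(2\sqrt{n})$, and integrating over $[-1,1]$ yields $\Exp W_1(Y_\sharp\dist, Y_\sharp\hat\dist) \leq 1/\sqrt{n}$. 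Chaining the two displays, $\Exp W_p(Y_\sharp\dist, Y_\sharp\hat\dist) \leq 2^{(p-1)/p}\, n^{-1/(2p)}$, so one may take $c_p = 2^{(p-1)/p}$.

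The delicate point is the opening inequality $W_p^p \leq 2^{p-1}\, W_1$: it leverages both the 1D quantile formula and the boundedness of $\yyspace$, and has no clean analogue in higher dimensions (where dyadic-transport arguments in the spirit of Fournier--Guillin are required). The constant produced by the elementary variance bound above is almost certainly suboptimal; sharper constants can be obtained via the Dvoretzky--Kiefer--Wolfowitz inequality or the finer analysis of Bobkov--Ledoux, but these refinements do not affect the $n^{-1/(2p)}$ rate, which is the only feature used downstream in the proof of Theorem~\ref{thm:finite}.
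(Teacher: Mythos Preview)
Your argument is correct. The paper does not actually supply a proof of this statement: it is quoted as a special case of \citep[Theorem 3.1]{lei2020convergence} and used as a black box in the proof of Theorem~\ref{thm:finite}. So there is no ``paper's own proof'' to match against; what you have written is a genuine, self-contained derivation that the paper defers to an external reference.

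Substantively, your route is more elementary than the cited source. Lei's Theorem~3.1 handles general $W_p$ on $\RR^d$ with unbounded support under moment conditions, and the machinery there is correspondingly heavier. You exploit exactly the two features that make the present setting easy: the support $\yyspace=[-1,1]$ is bounded (giving the pointwise bound $|F^{-1}_{Y_\sharp\dist}-F^{-1}_{Y_\sharp\hat\dist}|\leq 2$ and hence the reduction $W_p^p\leq 2^{p-1}W_1$), and the target is one-dimensional (giving the CDF formula for $W_1$ in Proposition~\ref{prop:wasserstein} and the binomial-variance bound). The chain $W_p\leq 2^{(p-1)/p}W_1^{1/p}$ followed by Jensen and $\Exp W_1\leq 1/\sqrt{n}$ is clean, and your explicit constant $c_p=2^{(p-1)/p}$ is a bonus the paper does not provide. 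Your closing caveat is also accurate: only the case $p=1$ is invoked downstream, where your bound reads $\Exp W_1\leq n^{-1/2}$ and the reduction step is vacuous.
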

Again, the interesting case here is when $p = 1$, which gives the same rate of $O(1/\sqrt{n})$ that coincides with the one in Proposition~\ref{prop:wconcentration}.

\finite*
\begin{proof}
    We first prove the finite sample lower bound w.r.t.\ the $\ell_1$ error. Realize that $W_1(\cdot,\cdot)$ is a metric, the triangle inequality gives us
    \begin{equation*}
        W_1(Y_\sharp\hat{\dist}_0, Y_\sharp\hat{\dist}_1) \leq W_1(Y_\sharp\hat{\dist}_0, Y_\sharp\dist_0) + W_1(Y_\sharp\dist_0, Y_\sharp\dist_1) + W_1(Y_\sharp\dist_1, Y_\sharp\hat{\dist}_1).
    \end{equation*}
    Combined with Theorem~\ref{thm:exact}, the above inequality leads to 
    \begin{equation*}
        \eps_{1,\dist_0}(\Ypred) + \eps_{1,\dist_1}(\Ypred) \geq W_1(Y_\sharp\hat{\dist}_0, Y_\sharp\hat{\dist}_1) - \left(W_1(Y_\sharp\hat{\dist}_0, Y_\sharp\dist_0) + W_1(Y_\sharp\dist_1, Y_\sharp\hat{\dist}_1)\right).
    \end{equation*}
    Hence it suffices if we could provide high probability bound to further lower bound $W_1(Y_\sharp\dist_i, Y_\sharp\hat{\dist}_i)$, for $i\in\{0, 1\}$. To this end, we first apply Proposition~\ref{prop:wconcentration} with $p = 1$: let $\exp(-2nt^2) = \delta / 2$ and solve for $t$, we have $t = \sqrt{\log(2/\delta)/ 2n}$, which means that with probability at least $1 - \delta / 2$, 
    \begin{align*}
        W_1(Y_\sharp\hat{\dist}_i, Y_\sharp\dist_i) &\leq \Exp W_1(Y_\sharp\hat{\dist}_i, Y_\sharp\dist_i) + \sqrt{\frac{\log(2/\delta)}{2n}}  \\
        &\leq c_p\sqrt{\frac{1}{n}} + \sqrt{\frac{\log(2/\delta)}{2n}} \tag{Theorem~\ref{thm:wmean}}.
    \end{align*}
    Now apply the above inequality twice, one for $i\in\{0, 1\}$. With a union bound, we have shown that w.p. $\geq 1-\delta$, 
    \begin{equation*}
        \eps_{1,\dist_0}(\Ypred) + \eps_{1,\dist_1}(\Ypred) \geq W_1(Y_\sharp\hat{\dist}_0, Y_\sharp\hat{\dist}_1) - \left(2c_1 + \sqrt{2\log(2/\delta)}\right)\sqrt{\frac{1}{n}}.
    \end{equation*}
    To prove the second lower bound w.r.t.\ the $\ell_2$ error, simply realize that $\eps_{2,\dist_i}(\Ypred) \geq \eps_{1,\dist_i}(\Ypred)$ for $i\in\{0, 1\}$, which completes the proof.
\end{proof}

\subsection{Proof of Corollary~\ref{cor:joint}}
\jointlowerbound*
\begin{proof}
    To simplify the notation used in the proof, define $\eps\defeq\eps_{p,\dist}(\Ypred)$, $\eps_0\defeq\eps_{p, \dist_0}(\Ypred)$ and $\eps_1\defeq\eps_{p,\dist_1}(\Ypred)$. Let $\alpha\defeq\Pr_\dist(A = 0)$. By Theorem~\ref{thm:exact}, we know that $\eps_0 + \eps_1 \geq W_p(Y_\sharp\dist_0, Y_\sharp\dist_1)$. By definition of the joint error:
    \begin{align*}
        \eps &= \alpha\eps_0 + (1-\alpha)\eps_1 \geq \alpha\eps_0 + (1-\alpha)(W_p(Y_\sharp\dist_0, Y_\sharp\dist_1) - \eps_0) \\
        &= (1-\alpha)W_p(Y_\sharp\dist_0, Y_\sharp\dist_1) + (2\alpha - 1)\eps_0.
    \end{align*}
    Similarly, we can also lower bound the joint error by:
    \begin{equation*}
        \eps \geq \alpha W_p(Y_\sharp\dist_0, Y_\sharp\dist_1) + (1-2\alpha)\eps_1.
    \end{equation*}
    Now we discuss in two cases. If $\alpha\leq 1/2$, considering the second inequality yields:
    \begin{align*}
        \eps \geq \alpha W_p(Y_\sharp\dist_0, Y_\sharp\dist_1) + (1-2\alpha)\eps_1 \geq \alpha W_p(Y_\sharp\dist_0, Y_\sharp\dist_1).
    \end{align*}    
    If $\alpha > 1/2$, using the first inequality we have:
    \begin{align*}
        \eps \geq (1-\alpha)W_p(Y_\sharp\dist_0, Y_\sharp\dist_1) + (2\alpha - 1)\eps_0 \geq (1-\alpha)W_p(Y_\sharp\dist_0, Y_\sharp\dist_1).
    \end{align*}    
    Combining the above two cases leads to:
    \begin{equation*}
        \eps \geq \min\{\alpha, 1-\alpha\}\cdot W_p(Y_\sharp\dist_0, Y_\sharp\dist_1) = \Hzo(A)\cdot W_p(Y_\sharp\dist_0, Y_\sharp\dist_1),
    \end{equation*}
    completing the proof.
\end{proof}

\subsection{Proof of Proposition~\ref{prop:comparison}}
\comparison*
To prove this proposition, we first state the theorem in the setting where the regressor has explicit access to the protected attribute $A$ from~\citet{chzhen2020fair} (using adapted notation for consistency):
\begin{theorem}
Assume, for each $a\in\{0, 1\}$, that the univariate measure $Y_\sharp\mu_a$ has a density and let $p_a\defeq \Pr(A = a)$. Then,
\begin{equation*}
\min_{g\text{ is fair}}\quad \Exp[(Y - g(X, A))^2] = \min_{\nu}\sum_{a\in\{0, 1\}}p_a\cdot W_2^2(Y_\sharp\mu_a, \nu),
\end{equation*}
where $\nu$ is a measure over $\RR$.
\end{theorem}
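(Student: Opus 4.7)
The plan is to establish the two inequalities $\geq$ and $\leq$ separately, using the key reformulation that a regressor $g(X,A)$ with access to $A$ satisfies statistical parity if and only if the two pushforwards $g(X,0)_\sharp\mu_0$ and $g(X,1)_\sharp\mu_1$ agree on a common measure $\nu$ over $\RR$. This lets one parametrize the fair regressors by their common output law $\nu$ and convert the constrained minimization into an outer minimization over $\nu$ of an inner problem over $g$ with prescribed pushforward.

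For the $\geq$ direction, I would condition on $A$ and decompose
\begin{equation*}
\Exp[(Y - g(X,A))^2] = \sum_{a\in\{0,1\}} p_a\, \Exp[(Y - g(X,a))^2 \mid A=a].
\end{equation*}
For any fair $g$ with common output law $\nu$, the joint distribution of $(Y, g(X,a))$ under $\mu_a$ is a particular coupling of $Y_\sharp\mu_a$ and $\nu$, so by the definition of the Wasserstein distance, $\Exp[(Y - g(X,a))^2 \mid A=a] \geq W_2^2(Y_\sharp\mu_a, \nu)$. Summing with weights $p_a$ and then minimizing over $\nu$ yields the lower bound; this is exactly the coupling estimate used in Theorem~\ref{thm:exact}, now applied groupwise rather than across groups.

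For the $\leq$ direction, I would exhibit a fair $g^*$ that achieves equality when $\nu$ is taken to be the $W_2$-barycenter $\nu^*$ of $\{Y_\sharp\mu_0, Y_\sharp\mu_1\}$ with weights $\{p_0, p_1\}$. Because each $Y_\sharp\mu_a$ admits a density on $\RR$, one-dimensional optimal transport theory furnishes monotone non-decreasing maps $T_a^*:\RR\to\RR$ with $(T_a^*)_\sharp Y_\sharp\mu_a = \nu^*$ and $W_2^2(Y_\sharp\mu_a, \nu^*) = \Exp_{Y\sim Y_\sharp\mu_a}[(Y - T_a^*(Y))^2]$; explicitly $T_a^* = F_{\nu^*}^{-1}\circ F_{Y_\sharp\mu_a}$ with $F_{\nu^*}^{-1}(t) = \sum_a p_a F_{Y_\sharp\mu_a}^{-1}(t)$. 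In the noiseless regime relevant to the surrounding Proposition~\ref{prop:comparison} (so that $Y = f_a^*(X)$ on $\mu_a$ for a measurable $f_a^*$), the candidate $g^*(X,a) \defeq T_a^*(f_a^*(X))$ is a legitimate function of $(X,A)$, satisfies $g^*(X,a)_\sharp\mu_a = \nu^*$ for both $a$ and hence is fair, and achieves $\Exp[(Y - g^*(X,a))^2 \mid A=a] = W_2^2(Y_\sharp\mu_a, \nu^*)$, matching the lower bound term by term.

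The main obstacle is precisely this last step: the monotone rearrangement produces $T_a^*(Y)$ rather than a function of $X$, so realizing the optimal coupling as a regressor only works cleanly when $Y$ is deterministic given $(X,A)$. Outside the noiseless regime, the natural fix is to apply the same monotone rearrangement to the Bayes regressor $\Exp[Y\mid X, A=a]$ in place of $Y$, which shifts both sides by the irreducible Bayes error; the theorem as stated is thus most naturally read under the noiseless hypothesis. Secondary checks involve measurability of $T_a^*$, attainment of the infimum over $\nu$ by the barycenter (guaranteed by the density assumption and strict convexity of $W_2^2$ on the barycenter functional), and verifying that the parametrization by $\nu$ indeed exhausts all laws realizable as the common output of a fair predictor.
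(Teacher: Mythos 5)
This statement is not proved in the paper: it is quoted (with adapted notation) from \citet{chzhen2020fair} as an ingredient for Proposition~\ref{prop:comparison}, so there is no in-paper proof to compare against. Judged on its own, your argument is correct and is essentially the argument of the cited source. The lower bound is exactly the coupling estimate: a fair $g$ has a common conditional output law $\nu$, the law of $(Y, g(X,a))$ under $\mu_a$ is a coupling of $Y_\sharp\mu_a$ and $\nu$, and one minimizes over $\nu$ at the end. The upper bound via the monotone maps $T_a^* = F_{\nu^*}^{-1}\circ F_{Y_\sharp\mu_a}$ onto the one-dimensional barycenter (whose quantile function is the $p_a$-weighted average of the group quantile functions) is likewise the intended construction, and the obstacle you flag is real: as literally transcribed the identity fails in the noisy setting --- take $Y$ independent of $(X,A)$ with $\Var(Y)>0$, so the right-hand side is $0$ while the left-hand side is at least $\Var(Y)$. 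The theorem of \citet{chzhen2020fair} is in fact stated for the Bayes regressor $f^*(x,a)=\Exp[Y\mid X=x, A=a]$ and the pushforwards $f^*(\cdot,a)_\sharp\mu_a$, which reduces to the displayed form precisely in the noiseless regime in which the paper invokes it for Proposition~\ref{prop:comparison}. Under that reading your proof is complete; the remaining checks you list (measurability of $T_a^*$, attainment of the minimum over $\nu$, and the fact that every fair $g$ is accounted for by some $\nu$) are all routine in one dimension.
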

Now we can proceed to prove the statement in Proposition~\ref{prop:comparison}.
\begin{proof}
Without loss of generality, let $p_0 \defeq \Pr(A = 0)\geq p_1\defeq \Pr(A = 1)$. For the special case of $W_2$ with $\|\cdot\|_2$ as the underlying metric, we know that the Wasserstein barycenter lies on the Wasserstein geodesic between $Y_\sharp\mu_0$ and $Y_\sharp\mu_1$~\citep{villani2009optimal}. Let $\nu^* = \argmin \sum_{a\in\{0, 1\}}p_a\cdot W_2^2(Y_\sharp\mu_a, \nu)$, i.e., $\nu^*$ is the Wasserstein barycenter. Now since $W_2(\cdot,\cdot)$ is a metric and $\nu^*$ lies on the geodesic, we know
\begin{equation}
\label{equ:geo}
    W_2(Y_\sharp\mu_0, \nu^*) + W_2(Y_\sharp\mu_1, \nu^*) = W_2(Y_\sharp\mu_0, Y_\sharp\mu_1).
\end{equation}
Compare the prices paid in these two cases:
\begin{align*}
    & \text{Price in $\ell_2$ when the regressor cannot access $A$}: && \Hzo(A)\cdot W_p(Y_\sharp\dist_0, Y_\sharp\dist_1), \\
    & \text{Price in $\ell_2$ when the regressor can access $A$}: && \sqrt{\sum_{a\in\{0, 1\}}p_a\cdot W_2^2(Y_\sharp\mu_a, \nu^*)}.
\end{align*}
It is easy to see that
\begin{align*}
    \sum_{a\in\{0, 1\}}p_a\cdot W_2^2(Y_\sharp\mu_a, \nu^*) &= p_0\cdot W_2^2(Y_\sharp\mu_0, \nu^*) + p_1\cdot W_2^2(Y_\sharp\mu_1, \nu^*) \\
                                                            &\geq p_1\cdot \left(W_2^2(Y_\sharp\mu_0, \nu^*) +  W_2^2(Y_\sharp\mu_1, \nu^*)\right) \tag{$p_0\geq p_1$}\\
                                                            &\geq \frac{p_1}{2}\left(W_2(Y_\sharp\mu_0, \nu^*) +  W_2(Y_\sharp\mu_1, \nu^*)\right)^2 \tag{AM-GM inequality} \\
                                                            &= \frac{p_1}{2}\cdot W_2^2(Y_\sharp\mu_0, Y_\sharp\mu_1) \tag{Eq.~\eqref{equ:geo}} \\
                                                            &\geq p_1^2\cdot W_2^2(Y_\sharp\mu_0, Y_\sharp\mu_1)\tag{$p_1\leq 1/2$} \\
                                                            &= \Hzo(A)^2 \cdot W_2^2(Y_\sharp\dist_0, Y_\sharp\dist_1),
\end{align*}            
completing the proof.
\end{proof}

\subsection{Proof of Proposition~\ref{prop:general}}
As a comparison to the Kolmogorov-Smirnov distance, the $W_1$ distance between distributions over $\RR$ could be equivalently represented as:
\begin{proposition}[\citet{gibbs2002choosing}]
    For two distributions $\dist$, $\dist'$ over $\RR$, $W_1(\dist,\dist') = \int_{\RR}|F_\dist(z) - F_{\dist'}(z)|~dz$.
    \label{prop:wasserstein}
\end{proposition}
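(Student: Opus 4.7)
The plan is to derive the identity directly from the quantile representation of $W_1$ that the paper has already stated in equation~\eqref{equ:quantile}, using the layer-cake formula for $|a-b|$ together with Fubini's theorem. The argument is standard for measures on $\RR$ but let me lay out the specific route I would follow.

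First I would specialize the quantile formula to $p=1$:
\begin{equation*}
W_1(\dist,\dist') = \int_0^1 \left|F_\dist^{-1}(t) - F_{\dist'}^{-1}(t)\right| dt.
\end{equation*}
The next step is to rewrite the integrand pointwise in $t$ by the identity
\begin{equation*}
|a-b| = \int_\RR \left|\ind(a \leq z) - \ind(b \leq z)\right| dz, \qquad a,b\in\RR,
\end{equation*}
applied with $a = F_\dist^{-1}(t)$ and $b = F_{\dist'}^{-1}(t)$. This reduces the proposition to a double integral over $(t,z) \in [0,1]\times\RR$ of the indicator $|\ind(F_\dist^{-1}(t) \leq z) - \ind(F_{\dist'}^{-1}(t)\leq z)|$.

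Then I would invoke the Galois-type duality between the CDF and its generalized inverse: for any distribution $\dist$ on $\RR$ and any $t\in(0,1), z\in\RR$,
\begin{equation*}
F_\dist^{-1}(t) \leq z \iff t \leq F_\dist(z).
\end{equation*}
Under Assumption~\ref{assumption:bounded} the CDFs are continuous, so this equivalence is clean and the set of $t$ where subtleties at jumps could occur has Lebesgue measure zero. Applying Fubini to swap integrals (legitimate since the integrand is nonnegative and bounded), the integral becomes
\begin{equation*}
\int_\RR \int_0^1 \left|\ind(t \leq F_\dist(z)) - \ind(t \leq F_{\dist'}(z))\right| dt\, dz.
\end{equation*}
For fixed $z$, the inner integral computes the Lebesgue measure of the symmetric difference of the intervals $[0, F_\dist(z)]$ and $[0, F_{\dist'}(z)]$, which is exactly $|F_\dist(z) - F_{\dist'}(z)|$. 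Substituting yields the claimed identity.

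The main obstacle is purely a bookkeeping one: making the quantile/CDF duality and the Fubini exchange rigorous at the at-most-countable set of atoms and at the endpoints $t\in\{0,1\}$. Since the paper's blanket Assumption~\ref{assumption:bounded} forces absolute continuity of the relevant measures, the CDFs are continuous and strictly increasing on the support, and these concerns vanish; a sentence noting this at the start of the proof suffices. An alternative, essentially equivalent route would be to use the Kantorovich--Rubinstein dual $W_1(\dist,\dist') = \sup_{\|f\|_L\leq 1} |\int f\,d\dist - \int f\,d\dist'|$ and take $f(y) = \sgn(F_\dist(y) - F_{\dist'}(y))$ as a limit of $1$-Lipschitz witnesses, but the layer-cake approach above is more elementary and gives both directions of the equality in a single calculation.
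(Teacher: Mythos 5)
Your proof is correct and follows exactly the route the paper itself indicates: the paper does not write out a proof but states that the identity ``could be proved using the equivalent characterization of $W_1$ in~\eqref{equ:quantile} by changing the integral variable,'' which is precisely your layer-cake-plus-Fubini computation starting from the quantile formula. Your worry about atoms is also unnecessary in the end, since the Galois duality $F_\dist^{-1}(t)\leq z \iff t\leq F_\dist(z)$ holds for any right-continuous nondecreasing $F$, so the argument goes through without Assumption~\ref{assumption:bounded}.
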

Proposition~\ref{prop:wasserstein} was stated as a fact without proof in~\citep{gibbs2002choosing}, but it is not hard to see that it could be proved using the equivalent characterization of $W_1$ in~\eqref{equ:quantile} by changing the integral variable. Furthermore, in regression if both $\dist$ and $\dist'$ are continuous distributions, then the following well-known result serves as a bridge to connect the Wasserstein distance $W_1(\cdot, \cdot)$ and the Kolmogorov-Smirnov distance $K(\cdot,\cdot)$:
\begin{lemma}
    If there exists a constant $C$ such that the density of $\dist'$ (w.r.t.\ the Lebesgue measure $\lambda$) is universally bounded above, i.e., $\|d\dist' / d\lambda\|_\infty \leq C$, then $K(\dist, \dist') \leq 2\sqrt{C\cdot W_1(\dist, \dist')}$.
    \label{lemma:kol}
\end{lemma}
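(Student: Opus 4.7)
The plan is to combine the integral representation of the Wasserstein-1 distance with the fact that the density bound on $\dist'$ makes its cumulative distribution function Lipschitz, which allows a pointwise lower bound on the CDF gap to be integrated into a genuine lower bound on $W_1$.

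First I would invoke the identity from Proposition (stated just above the lemma) that $W_1(\dist, \dist') = \int_{\RR} |F_\dist(z) - F_{\dist'}(z)|\,dz$, so setting $\Delta(z) \defeq F_\dist(z) - F_{\dist'}(z)$ the problem reduces to showing that $\sup_z |\Delta(z)| \leq 2\sqrt{C \int_{\RR} |\Delta(z)|\,dz}$. Writing $K \defeq K(\dist, \dist')$ and fixing $\eta > 0$, pick $z_0$ with $|\Delta(z_0)| \geq K - \eta$. By replacing $\dist, \dist'$ with their roles swapped if necessary (the assumption is only used to get Lipschitzness of one CDF; in the other sign case I run the symmetric argument, integrating leftward from $z_0$), I may assume $\Delta(z_0) = F_\dist(z_0) - F_{\dist'}(z_0) \geq K - \eta$.

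Next I would exploit two structural facts on an interval $[z_0, z_0 + (K-\eta)/C]$ to the right of $z_0$: monotonicity, $F_\dist(z) \geq F_\dist(z_0)$ since $F_\dist$ is non-decreasing, and Lipschitzness, $F_{\dist'}(z) \leq F_{\dist'}(z_0) + C(z - z_0)$ since the density bound $\|d\dist'/d\lambda\|_\infty \leq C$ forces $F_{\dist'}$ to be $C$-Lipschitz (this is the only place the hypothesis is used). Subtracting gives the key pointwise lower bound
\begin{equation*}
\Delta(z) \geq (K - \eta) - C(z - z_0) \geq 0 \quad \text{for } z \in [z_0,\, z_0 + (K-\eta)/C].
\end{equation*}
Integrating this affine lower bound over that interval of length $(K-\eta)/C$ yields
\begin{equation*}
\int_\RR |\Delta(z)|\,dz \geq \int_{z_0}^{z_0 + (K-\eta)/C} \bigl((K-\eta) - C(z-z_0)\bigr)\,dz = \frac{(K-\eta)^2}{2C}.
\end{equation*}
Letting $\eta \downarrow 0$ and rearranging gives $K^2 \leq 2C \cdot W_1(\dist, \dist')$, hence $K \leq \sqrt{2C\cdot W_1(\dist,\dist')} \leq 2\sqrt{C\cdot W_1(\dist,\dist')}$.

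The only real subtleties are clerical: the supremum defining $K$ may not be attained (handled by the $\eta$-slack), and the sign of $\Delta(z_0)$ may be negative (handled by running the analogous argument on $[z_0 - (K-\eta)/C,\, z_0]$, where now $F_{\dist'}(z) \geq F_{\dist'}(z_0) - C(z_0 - z)$ and $F_\dist(z) \leq F_\dist(z_0)$ give the same integral lower bound of $(K-\eta)^2/(2C)$). Neither requires any bound on the density of $\dist$, so the asymmetry in the hypothesis is consistent with the asymmetry of the argument. The constant $2$ in the bound is not tight, since the argument actually gives $\sqrt{2}$; I would keep the looser constant as stated in the lemma.
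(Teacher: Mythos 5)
Your proof is correct and complete. Note that the paper itself does not prove this lemma; it is cited as a ``well-known result'' immediately after Proposition~\ref{prop:wasserstein}, so there is no in-paper argument to compare against — your write-up actually supplies the missing justification. The argument you give is the standard one: the density bound makes $F_{\dist'}$ $C$-Lipschitz, so a CDF gap of size $K-\eta$ at $z_0$ persists (linearly decaying) over an interval of length $(K-\eta)/C$, and integrating the resulting triangle of area $(K-\eta)^2/(2C)$ under $|F_\dist - F_{\dist'}|$ bounds $W_1$ from below. You handle the two sign cases correctly and, as you observe, only ever need Lipschitzness of $F_{\dist'}$, which matches the one-sided hypothesis. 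Your bound $K \leq \sqrt{2C\,W_1(\dist,\dist')}$ is in fact sharper than the stated $2\sqrt{C\,W_1(\dist,\dist')}$ by a factor of $\sqrt{2}$, and this sharper constant is tight (take $\dist$ a point mass at $z_0$ and $\dist'$ uniform on an interval of length $K/C$ starting at $z_0$, which makes the affine lower bound an equality). No gaps.
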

Using Kolmogorov-Smirnov distance, the constraint in the optimization problem~\eqref{equ:opt} could be equivalently expressed as $K(h_\sharp\dist_0, h_\sharp\dist_1) \leq \epsilon$. Now with Lemma~\ref{lemma:kol}, we are ready to prove Proposition~\ref{prop:general}:
\generallowerbound*
\begin{proof}
    First, for $a\in\{0, 1\}$, by definition of the Wasserstein distance, for any predictor $\Ypred = h(X)$:
    \begin{align}
        W_p(Y_\sharp\dist_a, h_\sharp\dist_a) =&~ \left(\inf_{\gamma}\Exp_{\gamma}[|Y - \Ypred|^p]\right)^{1/p} 
        \leq \left(\Exp_{\dist_a}[|Y - \Ypred|^p]\right)^{1/p} = \eps_{p,\dist_a}(\Ypred),
    \end{align}
    Applying the above inequality to both $h$ and $f^*_a$, we have:
    \begin{equation}
        W_p(h_\sharp\dist_a, Y_\sharp\dist_a) + W_p(Y_\sharp\dist_a, {f_a^*}_\sharp\dist_a) \leq \eps_{p,\dist_a}(\Ypred) + \eps_{p,\dist_a}(f_a^*(X)) = \eps_{p,\dist_a}(\Ypred) + \eps_{p,\dist_a}^*,\quad\forall a\in\{0, 1\}.
    \label{equ:bridge}
    \end{equation}
    On the other hand, by the triangle inequality,
    \begin{equation*}
        W_p(h_\sharp\dist_0, h_\sharp\dist_1) + \sum_{a\in\{0, 1\}} W_p(h_\sharp\dist_a, Y_\sharp\dist_a) + W_p(Y_\sharp\dist_a, {f_a^*}_\sharp\dist_a) \geq W_p({f_0^*}_\sharp\dist_0, {f_1^*}_\sharp\dist_1).
    \end{equation*}
    Now by the assumption $W_p(h_\sharp\dist_0, h_\sharp\dist_1) \leq \epsilon$ and Eq.~\eqref{equ:bridge}, we have:
    \begin{equation*}
        \epsilon + \sum_{a\in\{0, 1\}} \eps_{p,\dist_a}(\Ypred) + \eps_{p,\dist_a}^* \geq W_p({f_0^*}_\sharp\dist_0, {f_1^*}_\sharp\dist_1).
    \end{equation*}
    By the definition of the excess risk, rearranging and subtracting $2\sum_{a\in\{0, 1\}} \eps_{p,\dist_a}^*$ from both sides of the inequality then completes the proof of the first part. 
    
    To show that $\Ypred$ is $2\sqrt{C\epsilon}$-SP, first note that $\Ypred = h(X)$ is $t$-SP iff $K(h_\sharp\dist_0, h_\sharp\dist_1)\leq t$. Now apply Lemma~\ref{lemma:kol}, under the assumption that $W_p(h_\sharp\dist_0, h_\sharp\dist_1)\leq \epsilon$:
    \begin{align*}
        K(h_\sharp\dist_0, h_\sharp\dist_1) & \leq 2\sqrt{CW_1(h_\sharp\dist_0, h_\sharp\dist_1)} \tag{Lemma~\ref{lemma:kol}}\\
                                            & \leq 2\sqrt{CW_p(h_\sharp\dist_0, h_\sharp\dist_1)} \tag{Monotonicity of the $W_p(\cdot,\cdot)$}\\
                                            &\leq 2\sqrt{C\epsilon},
    \end{align*}
    completing the proof.
\end{proof}

\subsection{Proof of Proposition~\ref{prop:individual}}
\individual*
\begin{proof}
Define $g(X, Y)\defeq |\Ypred - Y| = |h(X) - Y|$. We first show that if $h(X)$ is $\rho$-Lipschitz, then $g(X, Y)$ is $\sqrt{\rho^2 + 1}$-Lipschitz: for $\forall x, y, x, x'$:
\begin{align*}
    |g(x, y) - g(x', y')| &= \left||h(x) - y| - |h(x') - y'|\right| \\
                          &\leq |h(x) - h(x') - y + y'| \tag{Triangle inequality}\\
                          &\leq |h(x) - h(x')| + |y - y'| \\
                          &\leq \rho \|x - x'\| + |y - y'| \tag{$h$ is $\rho$-Lipschitz}\\
                          &\leq \sqrt{\rho^2 + 1}\cdot \sqrt{\|x - x'\|^2 + |y - y'|^2} \tag{Cauchy-Schwarz} \\
                          &= \sqrt{\rho^2 + 1} \cdot \|(x,y) - (x',y')\|.
\end{align*}
Let $\rho'\defeq \sqrt{\rho^2 + 1}$. Now consider the error difference:
\begin{align*}
    |\eps_{1,\dist_0}(\Ypred) - \eps_{1,\dist_1}(\Ypred)| &= |\Exp_{\dist_0}[|h(X) - Y|] - \Exp_{\dist_1}[|h(X) - Y|]| \\
                                                          &= |\Exp_{\dist_0}[g(X, Y)] - \Exp_{\dist_1}[g(X, Y)]| \\
                                                          &\leq \sup_{\|g'\|_L\leq \rho'} |\Exp_{\dist_0}[g'(X, Y)] - \Exp_{\dist_1}[g'(X, Y)]| \\
                                                          &= \rho' \sup_{\|g'\|_L\leq 1} |\Exp_{\dist_0}[g'(X, Y)] - \Exp_{\dist_1}[g'(X, Y)]| \\
                                                          &= \rho'\cdot W_1(\dist_0, \dist_1) \tag{Kantorovich duality},
\end{align*}
which completes the proof.
\end{proof}
    
\subsection{Proof of Proposition~\ref{prop:wlip}}
\wlip*
\begin{proof}
We first show that $W_1((h\circ g)_\sharp\dist_0, (h\circ g)_\sharp\dist_1)$ is small if $h$ is $\rho$-Lipschitz. To simplify the notation, we define $\dist'_0 \defeq g_\sharp\dist_0$ and $\dist'_1 \defeq g_\sharp\dist_1$. Consider the dual representation of the Wasserstein distance:
    \begin{align*}
        W_1(h_\sharp\dist'_0, h_\sharp\dist'_1) =&~ \sup_{\|f'\|_L\leq 1}\left|\int f'~d(h_\sharp\dist'_0) - \int f'~d(h_\sharp\dist'_1)\right| \tag{Kantorovich duality}\\
        =&~ \sup_{\|f'\|_L\leq 1}\left|\int f'\circ h~d\dist'_0 - \int f'\circ h~d\dist'_1\right| \tag{Change of Variable formula}\\
        \leq&~ \sup_{\|f\|_L\leq \rho} \left|\int f~d\dist'_0 - \int f~d\dist'_1\right| \tag{$h$ is $\rho$-Lipschitz}\\
        =&~ \rho\cdot W_1(\dist'_0, \dist'_1) \\
        =&~ \rho\cdot W_1(g_\sharp\dist_0, g_\sharp\dist_1) \\
        \leq&~ \rho\epsilon,
    \end{align*}
where the first inequality is due to the fact that for $\|f'\|_L\leq 1$, $\|f'\circ h\|_L\leq \|f'\|_L\cdot \|h\|_L = \rho$. Applying Lemma~\ref{lemma:kol} to $W_1(h_\sharp\dist'_0, h_\sharp\dist'_1)$ then completes the proof.
\end{proof}

\section{Further Details about the Experiments}
\label{sec:more}
\subsection{Dataset}
The Law School dataset contains 1,823 records for law students who took the bar passage study for Law School Admission.\footnote{We use the edited public version of the dataset which can be downloaded here: \url{https://github.com/algowatchpenn/GerryFair/blob/master/dataset/lawschool.csv}} The features in the dataset include variables such as undergraduate GPA, LSAT score, full-time status, family income, gender, etc. In the experiment, we use gender as the protected attribute and undergraduate GPA as the target variable. We use 80 percent of the data as our training set and the rest 20 percent as the test set. The data distribution for different subgroups in the Law School dataset could be found in Figure~\ref{fig:data-dist-law}. In the Law School dataset, $\Pr(A = 1) = 0.452$, which is a quite balanced dataset. All the experiments are performed on a Titan 1080 GPU.
\begin{figure}[htb]
\centering
    \includegraphics[width=0.6\linewidth]{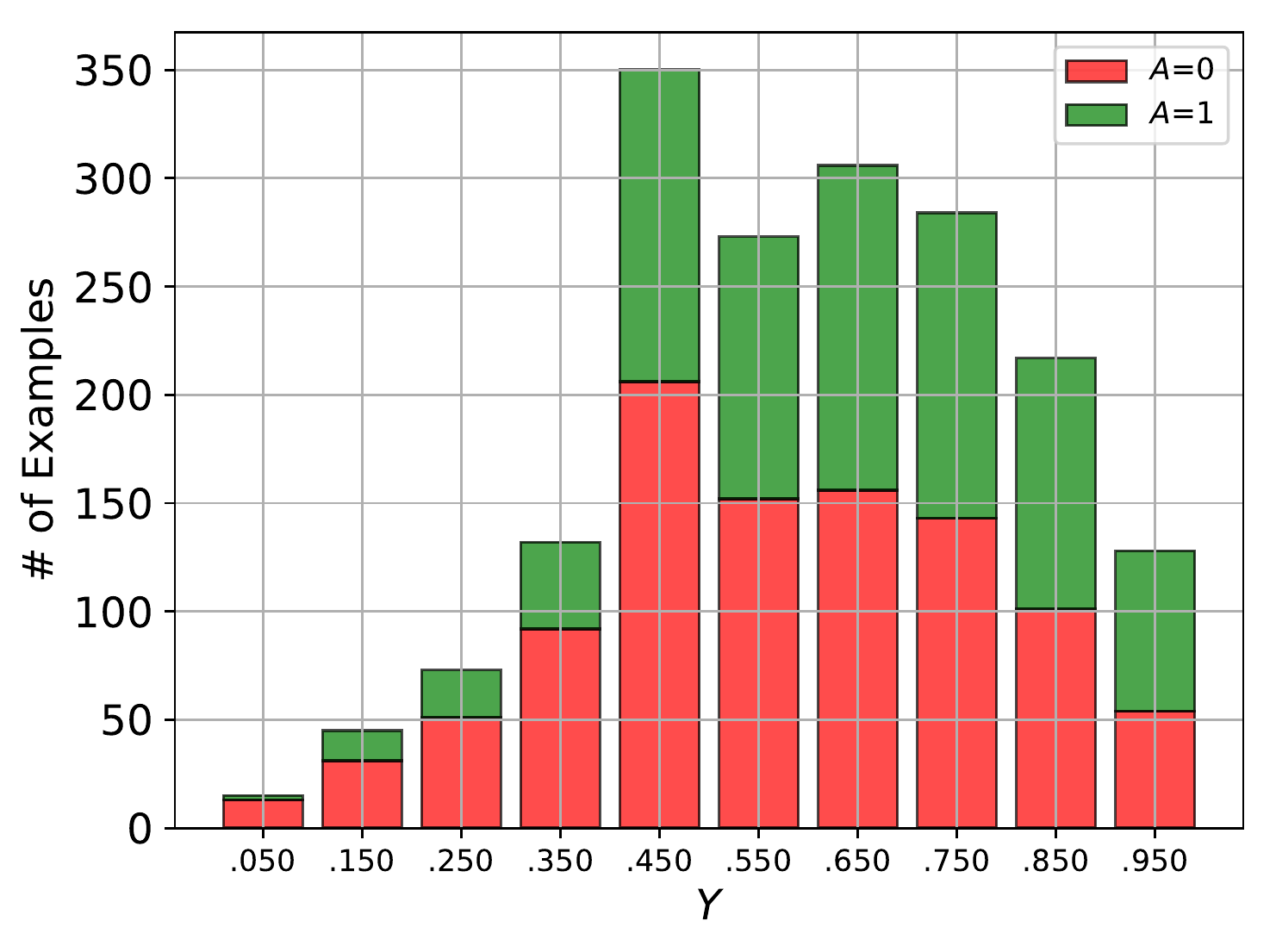}
    \caption{The data distributions of different groups in the Law School dataset.}
    \label{fig:data-dist-law}
\end{figure}

\subsection{Network Architectures}
We fix the baseline model to be a three hidden-layer feed-forward network with ReLU activations. The number of units in each hidden layer are 50 and 20, respectively. The output layer corresponds to a linear regression model. This baseline is denoted as MLP. For learning with Wasserstein regularization, the adversarial discriminator network takes the feature from the last hidden layer as input, and connects it to a hidden-layer with 10 units, followed by an auditor whose goal is to output a score function in distinguishing the features from the two different groups. This model is denoted as W-MLP. Compared with MLP, the only difference of W-MLP in terms of objective function is that besides the $\ell_2$ loss for target prediction, the W-MLP also contains a loss from the auditor to distinguish the sensitive attribute $A$. 

\subsection{Hyperparameters used in Experiments}
In this section we report the detailed hyperparameters used in our experiments to obtain the results in Table~\ref{tab:results}. Throughout the experiments, we fix the learning rate to be 1.0 and use the same networks as well as random seeds. One important aspect in the implementation of the Wasserstein adversary is the choice of the clipping parameter for the weights in the adversary network. The values used in our experiments are shown below in Table~\ref{tab:hyper}.
\begin{table}[htb]
    \centering
    \caption{Clipping parameters used in training the Wasserstein adversary.}
    \label{tab:hyper}
    \begin{tabular}{lSS}\toprule
                &   $\tau$ & Clipping Value\\\midrule
        W-MLP       &          0.1   &  0.1\\
        W-MLP       & 1.0   & 1.0\\
        W-MLP  & 5.0  &     5.0                                 \\
        W-MLP  & 10.0 & 10.0  \\
        \bottomrule
    \end{tabular}
\end{table}

\end{document}